\theoremstyle{plain}
\newtheorem{theorem}{Theorem}[section]
\newtheorem{proposition}[theorem]{Proposition}
\newtheorem{lemma}[theorem]{Lemma}
\theoremstyle{definition}
\newtheorem{assumption}[theorem]{Assumption}
\theoremstyle{remark}
\newtheorem{example}[theorem]{Example}
\icmltitlerunning{DDRO: A Statistically Consistent Approach to Aligning Large Language Models}
\begin{document}

\twocolumn[
\icmltitle{Direct Density Ratio Optimization: \\
A Statistically Consistent Approach to Aligning Large Language Models}

\begin{icmlauthorlist}
\icmlauthor{Rei Higuchi}{utokyo,riken}
\icmlauthor{Taiji Suzuki}{utokyo,riken}
\end{icmlauthorlist}
\icmlaffiliation{utokyo}{University of Tokyo, Tokyo, Japan}
\icmlaffiliation{riken}{Center for Advanced Intelligence Project, RIKEN, Tokyo, Japan}
\icmlcorrespondingauthor{Rei Higuchi}{higuchi-rei714@g.ecc.u-tokyo.ac.jp}

\vskip 0.3in
]

\printAffiliationsAndNotice{}  %

\newcommand{\memo}[1]{\textcolor{red}{[#1]}}
\newcommand{\pref}{p_{\text{ref}}}  %
\newcommand{\ptheta}{p_{\theta}}  %
\newcommand{\pthetahatn}{p_{\hat{\theta}}}  %
\newcommand{\ptilde}{\tilde{p}_{\theta}}  %
\newcommand{\dd}{\mathrm{d}}

\useunder{\uline}{\ul}{}
\newcommand{\boldunderline}[1]{\underline{\textbf{#1}}}

\begin{abstract}

Aligning large language models (LLMs) with human preferences is crucial for safe deployment, yet existing methods assume specific preference models like Bradley-Terry model.
This assumption leads to statistical inconsistency, where more data doesn't guarantee convergence to true human preferences.
To address this critical gap, we introduce a novel alignment method Direct Density Ratio Optimization (DDRO).
DDRO directly estimates the density ratio between preferred and unpreferred output distributions, circumventing the need for explicit human preference modeling.
We theoretically prove that DDRO is statistically consistent, ensuring convergence to the true preferred distribution as the data size grows, regardless of the underlying preference structure.
Experiments demonstrate that DDRO achieves superior performance compared to existing methods on many major benchmarks.
DDRO unlocks the potential for truly data-driven alignment, paving the way for more reliable and human-aligned LLMs.

\end{abstract}

\section{Introduction}
Large language models (LLMs) have achieved remarkable performance across various natural language processing tasks, including question answering, text generation, and translation \cite{achiam2023gpt,chowdhery2023palm,touvron2023llama}. However, they can also generate outputs containing harmful information, biased opinions, and misinformation, posing potential risks to society \cite{bender2021dangers,weidinger2021ethicalsocialrisksharm,bai2022training}. To develop safe and reliable LLMs, it is essential to align their behavior with human intentions, values, and ethics \cite{christiano2017deep,gabriel2020artificial,glaese2022improvingalignmentdialogueagents}.

Recently, alignment methods for LLMs, such as Reinforcement Learning from Human Feedback (RLHF) \cite{ouyang2022training,christiano2017deep,stiennon2020learning} and Direct Preference Optimization (DPO) \cite{rafailov2024direct}, have yielded significant progress. RLHF learns a reward model based on human feedback and optimizes the LLM's policy via reinforcement learning to maximize this reward. DPO directly optimizes the LLM's policy from human preference data without explicitly learning a reward model, and it has been reported to be more computationally efficient and stable than RLHF.

However, these existing methods rely on the assumption that human preferences follow a specific model (e.g., the Bradley-Terry model \cite{bradley1952rank}). This assumption may not accurately capture the complexity of real human preferences \cite{wu2022diagnostic}, leading to a fundamental issue of a lack of statistical consistency. Statistical consistency refers to the property that, as the amount of data increases, the learned model converges to the true optimal model \cite{vapnik1998statistical,mohri2012foundations}. Without statistical consistency, increasing the amount of data does not guarantee performance improvement, raising concerns about the model's reliability and safety. More concretely, even if one perfectly optimizes the loss function based on the Bradley-Terry model, it does not necessarily guarantee the accurate learning of the true underlying human preferences.

In this work, we aim to address the statistical consistency problem in existing methods and develop a more theoretically justified alignment method that benefits from increased data. Specifically, we propose a novel alignment method, ``Direct Density Ratio Optimization (DDRO),'' which directly estimates the distribution of preferable outputs \cite{SugiyamaSugiyama2012,sugiyama2007direct,kanamori2009least} without relying on human preference models. DDRO takes an approach that directly estimates the density ratio between the distributions of preferable and unpreferable outputs. This allows for direct alignment from data without depending on certain human preference models.
Our contributions can be summarized as follows:
\begin{enumerate}[topsep=0mm,itemsep=-1mm,leftmargin = 5mm] 
\item Methodological Contribution: We propose a new preference optimization method called DDRO that does not rely on any human preference model. The idea of our proposal is to frame the preference optimization problem into a \emph{direct density ratio estimation} problem between the preferred and unpreferred data distributions, and solve it by matching the true ratio and our estimator's ratio through minimizing their \emph{Bregman divergence}. DDRO does not require paired data, which contrasts with such a method as DPO that requires paired data. We also discuss how our method is connected to existing work by investigating how the preferred and unpreferred data distributions interact in the loss function.  
    \item Theoretical Contribution: We theoretically prove that DDRO is statistically consistent,  ensuring convergence to the true distribution as the number of data points increases, without dependence on any preference model. 
    This stems from the usage of the Bregman divergence for density ratio estimation because it gives an upper-bound of the $L^2$-distance from the true distribution of the preferred data.  
    \item Practical Contribution: We validate DDRO across multiple real-world benchmarks, showing that it achieves performance on par with or surpassing widely-adopted methods such as KTO and BCO. Even when converting paired datasets into an unpaired format, DDRO nearly matches or outperforms DPO, which is a paired method. 
\end{enumerate}

The remainder of this paper is organized as follows. Section 2 provides background knowledge on LLMs and alignment, as well as existing alignment methods and their limitations. Section 3 describes the detailed theoretical framework of our proposed method, Direct Density Ratio Optimization (DDRO), including its simplified version, and discusses its statistical consistency. Section 4 presents the experimental setup and results to validate the effectiveness of our proposed method. Section 5 discusses the experimental results, the strengths and weaknesses of our method, and the significance and contributions of this research. Finally, Section 6 concludes this paper and outlines future directions.

\section{Related Work: Limitations of Existing Alignment Methods}
\label{related_work}

Large language models (LLMs) have achieved remarkable progress in natural language processing. However, they also present risks, notably the generation of harmful or inappropriate content. Aligning LLMs with human values and preferences is therefore a critical challenge \cite{christiano2017deep,gabriel2020artificial,glaese2022improvingalignmentdialogueagents}. Reinforcement Learning from Human Feedback (RLHF) \cite{ouyang2022training,christiano2017deep,stiennon2020learning} has emerged as a promising paradigm for this alignment, demonstrating effectiveness in enhancing helpfulness, harmlessness, and alignment with complex instructions \cite{zheng2024balancing,dai2023safe}.

RLHF consists of a two-stage process. The first stage involves learning a reward model $r_\phi(x, y)$ that predicts the quality of a response $y$ to a given prompt $x$, based on human preference data. This is achieved using a dataset of human comparisons $\mathcal{D}_{\text{paired}} = \{(x_i, y_i^+, y_i^-)\}_{i=1}^N$, where each data point $(x_i, y_i^+, y_i^-)$ indicates that human annotators prefer response $y_i^+$ over $y_i^-$ for prompt $x_i$. The objective of reward model learning is to acquire a reward function that is consistent with human preferences.

In RLHF, the learning of the reward model employs a loss function $\mathcal{L}_{\text{RM}}(\phi)$ based on the Bradley-Terry model. The Bradley-Terry model is used to model human preference probabilities. For responses $y_1, y_2$ to a prompt $x$, the probability $\mathrm{Pr}(y_1 \succ y_2 | x)$ that response $y_1$ is preferred over $y_2$ is defined as
\begin{align}
\mathrm{Pr}(y_1 \succ y_2 | x) = \sigma(r_\phi(x, y_1) - r_\phi(x, y_2))
\end{align}
where $\sigma(z) = (1 + e^{-z})^{-1}$ is the sigmoid function.

Based on this model, the loss function $\mathcal{L}_{\text{RM}}(\phi)$ is formulated to minimize the expected binary cross-entropy loss in predicting human preferences in the paired dataset $\mathcal{D}_{\text{paired}}$:
\begin{align}
\mathcal{L}_{\text{RM}}(\phi) = - \mathbb{E}_{\mathcal{D}_{\text{paired}}} \left[ \log \sigma(r_\phi(x, y^+) - r_\phi(x, y^-)) \right]
\end{align}

In the second stage, the learned reward model is used to optimize the LLM policy $\ptheta(y \mid x)$ using reinforcement learning algorithms. This is typically done using Proximal Policy Optimization (PPO) \cite{schulman2017proximal} by maximizing the following objective function:
\begin{equation}
  \max_{\theta} \mathbb{E}_{x \sim \mathcal{D}, y \sim \ptheta(\cdot|x)}[r_\phi(x, y)] - \beta \text{KL}[\ptheta(y \mid x) \| \pref(y \mid x)], \label{eq:RLHF_objective}
\end{equation}

Here, $\pref(y \mid x)$ is the reference policy (often the initial LLM policy), and $\beta$ is a hyperparameter that controls the strength of KL divergence regularization. The KL divergence term contributes to stabilizing training and mitigating reward hacking \cite{ziegler2019fine}.

While RLHF has been empirically successful, it has limitations such as high computational costs due to separate reward model and policy learning, and instability inherent in reinforcement learning \cite{rafailov2024direct}. To address these issues, Direct Preference Optimization (DPO) \cite{rafailov2024direct} was proposed. DPO aims to streamline the alignment process by directly optimizing the policy from preference data, bypassing explicit reward modeling. DPO leverages the theoretical connection between reward functions and optimal policies, demonstrating improved stability and computational efficiency compared to RLHF. However, DPO inherits a critical assumption from the underlying Bradley-Terry model: that human preferences can be accurately modeled by this specific parametric form. This reliance on the Bradley-Terry model raises concerns about statistical consistency. Specifically, even DPO converges to the optimal policy under the Bradley-Terry model assumption as paired comparison data increases, convergence to the true optimal policy reflecting actual human preferences is not guaranteed if human preferences deviate from this model. 
\begin{proposition} \label{prop:BT_model_inconsistency}
    There exists a class of preferences, none of which can be obtained by minimizing the negative log-likelihood loss under the Bradley-Terry model assumption.
\end{proposition}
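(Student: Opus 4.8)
The plan is to exhibit an explicit family of pairwise preference distributions, every member of which provably cannot be realized by \emph{any} Bradley--Terry reward function, and then to argue that the minimizer of the population negative log-likelihood loss can therefore never coincide with a member of this family. First I would reduce to the simplest setting: a single prompt $x$ (dropped from the notation) and a response set with three elements $a,b,c$, so that a reward model is just a vector $r=(r_a,r_b,r_c)\in\mathbb{R}^3$ and the preferences it induces are $\widehat q(y\succ y')=\sigma(r_y-r_{y'})$. The structural fact I would rely on is that $\log\frac{\widehat q(y\succ y')}{\widehat q(y'\succ y)}=r_y-r_{y'}$, so the three log-odds around the cycle $a\to b\to c\to a$ telescope to $(r_a-r_b)+(r_b-r_c)+(r_c-r_a)=0$; equivalently, every Bradley--Terry preference has product of odds equal to $1$ around every triangle (in particular it obeys weak stochastic transitivity).

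Next I would construct the class. For each $t\in(1/2,1)$ let $q_t$ be the ``rock--paper--scissors'' preference with $q_t(a\succ b)=q_t(b\succ c)=q_t(c\succ a)=t$ and complementary probabilities $1-t$ — an intransitive Condorcet cycle that even violates weak stochastic transitivity, since $q_t(a\succ b),q_t(b\succ c)>1/2$ but $q_t(a\succ c)<1/2$. Its product of odds around the triangle is $\bigl(t/(1-t)\bigr)^{3}>1$, so $q_t$ is not Bradley--Terry representable: any $r$ with $\sigma(r_a-r_b)=\sigma(r_b-r_c)=\sigma(r_c-r_a)=t$ would force $r_a-r_b=r_b-r_c=r_c-r_a=\log\frac{t}{1-t}$, whose sum is simultaneously $0$ and $3\log\frac{t}{1-t}>0$. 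Because every entry of $q_t$ lies in $(0,1)$, the same telescoping argument applied to a convergent sequence of Bradley--Terry preferences (whose logits then stay bounded and converge) shows $q_t$ is not even in the closure of the Bradley--Terry family, hence is bounded away from it; the class $\{q_t:t\in(1/2,1)\}$ — or more generally the open set of all preferences on $\{a,b,c\}$ whose three cyclic probabilities exceed $1/2$ — then does the job.

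Finally I would connect non-representability to non-recoverability. Sampling the unordered pair $\{y^+,y^-\}$ uniformly from the three pairs and the ordering from $q_t$, the population loss $\mathcal{L}(r)=-\mathbb{E}[\log\sigma(r_{y^+}-r_{y^-})]$ is convex in $r$ and, modulo the shift direction $r_a=r_b=r_c$, coercive (each $q_t$-probability is positive, so $\mathcal{L}\to\infty$ once some $r_y-r_{y'}\to\pm\infty$), hence it attains its minimum at some $r^\star$. The recovered preference $\widehat q^\star$ induced by $r^\star$ is by construction a Bradley--Terry preference, hence cyclic-consistent, and therefore cannot equal the non-cyclic-consistent $q_t$; for concreteness, the three-fold symmetry of $q_t$ forces $r^\star\equiv 0$ (the gradient at $r=0$ is coordinatewise a multiple of $(q_t(a\succ b)-q_t(b\succ a))+(q_t(a\succ c)-q_t(c\succ a))=(2t-1)+(1-2t)=0$), so $\widehat q^\star\equiv 1/2\ne t$. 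Since this already holds in the infinite-data limit, more data cannot fix it, which is exactly the asserted inconsistency. The step I expect to be the main obstacle is this last one: one has to cleanly separate ``the true preference lies outside the Bradley--Terry family'' from ``the estimator necessarily lands inside that family and so misses the truth,'' and to do so one first needs the convex loss to actually attain a minimum so that there is a well-defined estimate to compare against — the coercivity and symmetry observations above are what make this go through.
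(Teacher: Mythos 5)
Your proposal is correct and follows essentially the same route as the paper: the same rock--paper--scissors class of cyclic preferences on three responses to a single prompt, ruled out of the Bradley--Terry family because the three logit differences around the cycle must telescope to zero, forcing all pairwise probabilities to $1/2$ and contradicting $t \neq 1/2$. The only difference is that you additionally formalize the step the paper leaves implicit --- that the population NLL minimizer exists (convexity plus coercivity modulo the common shift), lies in the Bradley--Terry family (indeed $r^\star = 0$ by symmetry, inducing the uniform $1/2$ preference), and hence can never coincide with $q_t$ --- which is a welcome tightening of the same argument rather than a different one.
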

See Appendix~\ref{appendix:proof_prop_BT_inconsistency} for the proof. Proposition \ref{prop:BT_model_inconsistency} suggests that the performance of DPO may be fundamentally limited by the validity of the Bradley-Terry model assumption. Furthermore, both RLHF and DPO rely on paired comparison data. Collecting such data at scale can be expensive and time-consuming, potentially hindering the scalability of these alignment approaches.

To mitigate the reliance on paired comparison data and potentially improve scalability, Kahneman-Tversky Optimization (KTO) \cite{ethayarajh2024kto} introduced the use of unpaired preference data. Unpaired data, where individual responses are labeled with scores or binary feedback (e.g., ``good'' or ``bad''), is often more readily available and scalable as it can be collected from real-world user interactions. KTO aligns LLMs using such unpaired data by leveraging prospect theory \cite{kahneman2013prospect,tversky1992advances}, which captures the asymmetry in human value functions, particularly the higher sensitivity to losses compared to gains. While KTO offers a practical approach by utilizing more accessible unpaired data, it also assumes a preference model based on prospect theory. Similar to DPO's reliance on the Bradley-Terry model, the performance and statistical consistency of KTO are contingent on the validity of prospect theory assumption in capturing human preferences.

In contrast to DPO and KTO, which depend on specific parametric models for human preferences, our proposed approach aims to achieve statistical consistency without assuming a particular preference model. While we also utilize unpaired preference data as in KTO, the key novelty of our approach lies in its ability to learn a policy that is guaranteed to converge to the distribution of preferred outputs as the amount of unpaired data increases, regardless of whether human preferences strictly adhere to Bradley-Terry, prospect theory, or other predefined models. Such statistical consistency is crucial for ensuring the reliability and robustness of LLM alignment. Especially in real-world scenarios where human preferences are complex and diverse, and may be poorly modeled by simplistic assumptions, our approach offers a significant advantage.

\section{Direct Density Ratio Optimization (DDRO)}
\label{sec:method}

To avoid relying on a specific preference model such as Bradley-Terry model, we propose a new method called \emph{Direct Density Ratio Optimization} (DDRO) that estimates the aligned model through density ratio estimation.
The basic idea of existing preference optimization methods (DPO, PPO, KTO) is to estimate the density ratio between the aligned model and the given reference model.
Indeed, the optimal solution for PPO \eqref{eq:RLHF_objective} can be given as $p^* \propto \exp(- r_\phi /\beta) \pref$, that is,
PPO obtains the solution $p^*$ via the  {\it density ratio} $p^*/\pref$ which is proportional to $\exp(- r_\phi /\beta)$.
The same argument applies to DPO and KTO.
Based on this observation, our idea is to directly estimate the density ratio without going through any intermediate step such as estimating an artificial preference model (like Bradley-Terry model) or constructing a reward model.
For that purpose, we utilized a technique developed in the field of direct density ratio estimation \cite{sugiyama2012density}.
To introduce our method, we begin by describing the problem setting.
We then derive the DDRO objective through \emph{Bregman divergence}, explain its advantage from a theoretical viewpoint, and conclude by providing a practical, simplified variant of the approach.

\subsection{Problem Setting}

The alignment of large language models can be formulated as an optimization problem aimed at bringing the model's output distribution closer to a desirable human preference distribution.
Specifically, given a prompt $x$ from a prompt set $\mathcal{X}$, our goal is to align the conditional distribution of responses $y$ denoted as $\ptheta(y \mid x)$ with the unknown true preferred response distribution $p^+(y \mid x)$. This alignment is achieved through density ratio estimation in our method.
We also denote $p_{\mathrm{x}}(x)$ as the probability distribution of a prompt $x \in \mathcal{X}$.
Here, we consider responses $y$ to be elements of a response set $\mathcal{Y}$.

\begin{assumption}[Assumption on Response Set]
    \label{assumption:response_set}
        We assume that the prompt set $\mathcal{X}$ and response set $\mathcal{Y}$ are finite.
\end{assumption}
This is a practically motivated assumption. For both prompts and responses, large language models operate on sequences of tokens from a finite vocabulary $\mathcal{V}$. Given the context window length $C$ and generation length limit $T$, the size of possible sequences is bounded by $|\mathcal{V}|^C$ for $\mathcal{X}$ and $|\mathcal{V}|^T$ for $\mathcal{Y}$, ensuring both sets are finite.
Technically, this assumption is particularly required to ensure that we can calculate the density function (or more precisely, the probability mass function).
As long as we can obtain a density or mass function, then we may remove this condition.

Our goal is to align this model distribution with a true preferred distribution $p^+(y \mid x)$, which represents the responses that humans find desirable. We also denote $p^-(y \mid x)$ as the corresponding true unpreferred response distribution. Although $p^+(y \mid x)$ is unknown, we assume access to unpaired preference data $\mathcal{D}_+ =(x_i^+,y_i^+)_{i=1}^{n^+}$ and $\mathcal{D}_-=(x_i^-,y_i^-)_{i=1}^{n^-}$ sampled from $p^+$ and $p^-$ respectively, where $n^+ = |\mathcal{D}_+|$ and $n^- = |\mathcal{D}_-|$.
We remark that our method requires only unpaired data and does not require a human preference modeling that describes the relation between pairs. This property is more like KTO rather than DPO which requires paired data.

Here we consider a situation where the data are generated through the reference model $\pref$ and human taggers give labels of ``preferred'' and ``unpreferred.''
Then, we have the following relation between $\pref$, $p^+$ and $p^-$:
\begin{equation}\label{eq:densityBayesDecomp}
\pref(y \mid x) = p^+(y \mid x) p( + \mid x) + p^-(y \mid x)  p( - \mid x),
\end{equation}
where $p( + \mid x)$ and $p( - \mid x)$ are the probability of preferred and unpreferred labels, respectively.
As we have described above, the preference optimization can be reduced to a problem of estimating the density ratio $r^*(y | x) = \pref(y|x)/p^+(y|x)$.
Indeed, if we know $r^*(y | x)$, then we can recover $p^+(y|x)$ as $p^+(y|x) = \pref(y|x)/r^*(y | x)$.
From Eq.~\eqref{eq:densityBayesDecomp}, the density ratio can be rewritten as
$r^*(y \mid x) = p( + \mid x) +   p( - \mid x) \frac{p^-(y \mid x)}{p^+(y \mid x)}$, which means that estimating $r^*$ is reduced to estimating another density ratio $\frac{p^-(y \mid x)}{p^+(y \mid x)}$.
Hence, DDRO aims to estimate this density ratio $g^*(y \mid x) = \frac{p^-(y \mid x)}{p^+(y \mid x)}$ to obtain $r^*$ using $\mathcal{D}_+$ and $\mathcal{D}_-$.
Here, we place the assumption that $p( + \mid x)$ is a constant, say $t \in (0,1)$, for all input $x$.
\begin{assumption}[Assumption on Reference Distribution]
\label{assumption:ref_dist}
The reference distribution $\pref$ can be expressed as a convex combination of $p^+$ and $p^-$:
\begin{align}
    \pref = t p^+
    + (1-t) p^-,
    \quad
    t \in (0,1).
\end{align}
In other words, the probability of preferred data is constant: $p( + \mid x) = t ~(\forall x \in \mathcal{X})$.
\end{assumption}
Under this condition, we can see $g^*$ and $r^*$ satisfy the following relationship:
\begin{equation}\label{eq:gstarandrstarrelation}
g^* = \frac{1}{1 - t} r^* - \frac{t}{1-t}.
\end{equation}
Therefore, the true density ratio $r^*$ can be estimated by estimating $g^*$ from data.
In the next section, we derive a {\it Bregman divergence} loss to estimate $g^*$, which is the key component of our proposal.

\subsection{DDRO: The Proposed Method}

Following the argument described above, DDRO directly matches the ratio between $\ptheta$ and $\pref$ to the true ratio $g^*$ by minimizing the Bregman divergence loss:
Let $f: \mathbb{R} \to \mathbb{R}$ be a differentiable strictly convex function with $f(1) =0$, then we define the Bregman divergence loss for the density ratio estimator as
\begin{align}
    \mathcal{L}_{\mathrm{Breg}}(\theta) = \mathbb{E}_{x\sim p_{\mathrm{x}},y|x \sim p^+} \left[
    \mathrm{Breg}_f\bigl(g_{\theta} \,\|\, g^*\bigr)
    \right],
\end{align}
where 
\begin{align}
    & \mathrm{Breg}_f(g \| \tilde{g}) := f(\tilde{g}) - f(g) - f'(g) (\tilde{g} - g),  \label{eq:bregman_divergence} \\
    & g_{\theta}
    := \frac{1}{1-t} \frac{\pref}{\ptheta}
    - \frac{t}{1-t}.
\end{align}
From the convexity of $f$, $\mathcal{L}_{\mathrm{Breg}}(\theta)$ is always non-negative and equals to 0 if and only if $g_{\theta} = g^*$, $(p^+ \times p_{\mathrm{x}})$-almost surely.
Moreover, from Eq.~\eqref{eq:gstarandrstarrelation}, we can also notice that $\ptheta = p^+$ if and only if $g_\theta = g^*$.
This observation ensures that we may estimate the preferred data distribution $g^+$ by minimizing the Bregman divergence loss $\mathcal{L}_{\mathrm{Breg}}(\theta)$.
Now, we also define
$\ptilde = \frac{1}{1-t}\, \pref- \frac{t}{1-t}\,\ptheta$, which corresponds to an estimator of the unpreferred data distribution $p^-$.
Then, we have the following statement.
\begin{proposition}\label{prop:COnsistencyBregman}
Suppose that $\hat{\theta}$ minimizes the Bregman divergence loss so that $\mathcal{L}_{\mathrm{Breg}}(\hat{\theta}) = 0$, then it holds that
\begin{equation}
p_{\hat{\theta}}(y | x) = p^+(y | x),
~ \tilde{p}_{\hat{\theta}}(y| x) = p^-(y | x)~~(\text{$(p^+ \!\times \!p_{\mathrm{x}})$-a.s.}).
\end{equation}
\end{proposition}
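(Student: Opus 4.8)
The plan is to combine the pointwise non-negativity of the Bregman integrand with the strict convexity of $f$, and then unwind the algebraic definitions of $g_{\theta}$, $g^*$ and $\tilde{p}_{\theta}$; no analytic estimates are required.

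First I would turn the hypothesis $\mathcal{L}_{\mathrm{Breg}}(\hat{\theta})=0$ into a pointwise identity. For a convex $f$, the quantity $\mathrm{Breg}_f(a\|b)$ in \eqref{eq:bregman_divergence} is the first-order Taylor remainder of $f$ at $a$ evaluated at $b$, hence non-negative, and strict convexity upgrades this to $\mathrm{Breg}_f(a\|b)=0\iff a=b$. Since $\mathcal{L}_{\mathrm{Breg}}(\hat{\theta})$ is the $(p^+\times p_{\mathrm{x}})$-expectation of this non-negative integrand, its vanishing forces $\mathrm{Breg}_f\bigl(g_{\hat{\theta}}(y\mid x)\,\|\,g^*(y\mid x)\bigr)=0$, and therefore $g_{\hat{\theta}}(y\mid x)=g^*(y\mid x)$, for $(p^+\times p_{\mathrm{x}})$-almost every $(x,y)$. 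This is precisely the ``iff'' already asserted right after the definition of the loss; the only thing to spell out is the measure-theoretic step that the expectation of a non-negative function is zero iff the function vanishes almost everywhere.

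Next I would recover $p^+$. On the support of $p^+$, Assumption~\ref{assumption:ref_dist} with $t\in(0,1)$ gives $\pref=t\,p^+ + (1-t)\,p^-\ge t\,p^+>0$, so both $g^*=\frac{1}{1-t}\frac{\pref}{p^+}-\frac{t}{1-t}$ and $g_{\hat{\theta}}=\frac{1}{1-t}\frac{\pref}{p_{\hat{\theta}}}-\frac{t}{1-t}$ are finite there (implicitly using $\ptheta>0$, which holds for the usual softmax parametrization; otherwise one restricts attention to $\{\ptheta>0\}$). Equating the two expressions and cancelling the common affine terms yields $\pref/p_{\hat{\theta}}=\pref/p^+$, hence $p_{\hat{\theta}}(y\mid x)=p^+(y\mid x)$ wherever $\pref(y\mid x)>0$, in particular on the support of $p^+$; this is the first assertion. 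Equivalently, one can just invoke the relation~\eqref{eq:gstarandrstarrelation}, which already states that $g_{\theta}=g^*$ if and only if $\ptheta=p^+$. For the second assertion I would substitute $p_{\hat{\theta}}=p^+$ into $\tilde{p}_{\hat{\theta}}=\frac{1}{1-t}\pref-\frac{t}{1-t}p_{\hat{\theta}}$ and apply Assumption~\ref{assumption:ref_dist} once more: $\frac{1}{1-t}\bigl(t\,p^+ + (1-t)\,p^-\bigr)-\frac{t}{1-t}p^+=p^-$, valid on the same almost-sure set.

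The only delicate point --- indeed essentially the only place requiring care --- is the measure-theoretic bookkeeping: the loss constrains $g_{\hat{\theta}}$ only on the support of $p^+$, so each ``a.s.'' qualifier must be propagated, and one must verify that $\pref>0$ exactly on that support so that the cancellation above is legitimate. This is where $t\in(0,1)$ in Assumption~\ref{assumption:ref_dist} is used; away from the support of $p^+$ nothing is claimed, which is consistent with the $(p^+\times p_{\mathrm{x}})$-a.s. conclusion.
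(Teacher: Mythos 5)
Your proposal is correct and follows essentially the same route as the paper's proof: vanishing of the expected non-negative Bregman integrand forces $g_{\hat{\theta}}=g^*$ $(p^+\times p_{\mathrm{x}})$-a.s. by strict convexity, which via the relation \eqref{eq:gstarandrstarrelation} gives $p_{\hat{\theta}}=p^+$, and substituting into $\tilde{p}_{\hat{\theta}}$ together with Assumption~\ref{assumption:ref_dist} yields $p^-$. Your extra care about positivity of $\pref$ on the support of $p^+$ and the a.s.\ bookkeeping is a welcome (if minor) tightening of the same argument.
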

This motivates us to use the Bregman divergence loss for the preference optimization.

\paragraph{Empirical estimate of the Bregman divergence loss.}
Although we know that minimizing the Bregman divergence loss leads to estimating the preferred data distribution $p^+$, it is not obvious how to calculate the loss from the observed data.
However, expanding the definition of the Bregman divergence, we see that
\begin{align}
& \int f(g^*) - f(g_\theta) - f'(g_\theta)(g^* - g_\theta) \dd p_+(y|x) \\
=
& \int - f(g_\theta) - f'(g_\theta)\left(\frac{p_-}{p_+} - g_\theta\right) \dd p_+(y|x) + \text{(const.)} \\
=
& \mathbb{E}_{p_+}\left[ - f(g_\theta) +  f'(g_\theta) g_\theta\right]
- \mathbb{E}_{p_-}\left[  f'(g_\theta) \right] +  \text{(const.)},
\end{align}
where $\mathbb{E}_{p_{\pm}}[\cdot]$ denotes expectation with respect to $p_{\pm}(y|x)p_{\mathrm{x}}$(x).
Therefore, the Bregman divergence is represented by the expectations with respect to the preferred and unpreferred data distributions, which can be well approximated by the empirical average over the fine-tuning training datasets:
The empirical risk $\hat{\mathcal{L}}_\text{Breg}(\theta)$ is given by
\begin{align}
    \hat{\mathcal{L}}_\text{Breg}(\theta) =~ &\frac{1}{n^+} \sum_{z_i^+ \in \mathcal{D}_+} \!\!\!\left(
    - f(g_{\theta}(z_i^+)) + f'(g_{\theta}(z_i^+))g_{\theta}(z_i^+)
    \right) \nonumber \\
    &- \frac{1}{n^-} \!\!\! \sum_{z_i^- \in \mathcal{D}_-} \left(
    f'(g_{\theta}(z_i^-))
    \right),
    \label{eq:Bregman_loss_empirical}
\end{align}
where we abbreviated the notation as $g_\theta(z_i) = g_\theta(y_i \mid x_i)$ for $z_i = (x_i,y_i)$.
This enables us to directly estimate the density ratio with the consistency guarantee (Proposition \ref{prop:COnsistencyBregman}).

Since just minimizing the empirical risk $\hat{\mathcal{L}}_\text{Breg}(\theta)$ will lead to catastrophic forgetting, we include a Kullback--Leibler divergence ($\mathrm{KL}$-divergence) regularization in the objective: 
\begin{align}
    \mathcal{L}_\text{DDRO}(\theta)
    :=
    \mathcal{L}_{\text{Breg}}(\theta)
    + \gamma\,\mathrm{KL}\bigl(\ptheta \,\|\, \pref\bigr),
    \label{eq:ddro_loss}
\end{align}
where $\mathrm{KL}(p  \,\|\,  q) := \mathbb{E}_q[\log(p/q)]$.
This is the population version of our DDRO objective. 

Analogously, its empirical version is also defined as
\begin{align}
    & \hat{\mathcal{L}}_\text{DDRO}(\theta) = \\
    &\frac{1}{n^+} \sum_{z_i^+ \in \mathcal{D}_+} \left(
    - f(g_{\theta}(z_i^+)) + f'(g_{\theta}(z_i^+))g_{\theta}(z_i^+) \right. \\
    &\left.  ~~~~~~~~~~~~~~~+ \gamma t g_{\theta}(z_i^+)^{-1} \log g_{\theta}(z_i^+)
    \right) \nonumber \\
    &- \frac{1}{n^-} \sum_{z_i^- \in \mathcal{D}_-} \left(
    f'(g_{\theta}(z_i^-)) \right. \\ &\left. ~~~~~~~~~~~~~~~ + \gamma (1-t) g_{\theta}^{-1}(z_i^-) \log g_{\theta}(z_i^-)
    \right),
    \label{eq:ddro_loss_empirical}
\end{align}
where the $\mathrm{KL}$-divergence is also replaced by its empirical estimate based on Assumption \ref{assumption:ref_dist} to represent $\pref$.
We can see that various types of preference optimization methods can be derived from our DDRO formulation \eqref{eq:ddro_loss_empirical} by examining various convex functions $f$ for the Bregman divergence. 
This offers modeling flexibility depending on the data property.

\begin{example} \label{example:logistic_ddro}
Here, we give one particular example of the Bregman divergence.
Suppose that we employ $f(x) = x \log x - (1 + x) \log(1 + x)$. Then the Bregman loss is reduced to the logistic loss:
\begin{align}
    \mathcal{L}_\text{Breg}(\theta) &= \mathbb{E}_{p^+} \left[\log (1 + g_{\theta}) \right] + \mathbb{E}_{p^-} \left[\log (1 + g_{\theta}^{-1}) \right].  \label{eq:logistic_ddro_loss}
\end{align}
Indeed, if we let a ``classifier'' $f_\theta$ be $f_\theta = \log(1/g_\theta)$, then the loss becomes
$\mathcal{L}_\text{Breg}(\theta) = \mathbb{E}_{p^+} \left[\log (1 + \exp(-f_{\theta}) \right] + \mathbb{E}_{p^-} \left[\log (1 + \exp(f_{\theta})) \right]$, which coincides with the loss of the logistic regression.
We employ this logistic loss for our implementation.
See \citet{SugiyamaSugiyama2012} for more choices of density ratio estimators.
\end{example}

\paragraph{Related work on density ratio estimation}
Direct density ratio estimation has been extensively studied in the literature \cite{sugiyama2012density}.
The idea behind this technique is to directly model the ratio between two densities $p(x) / q(x)$ is more efficient and robust, especially in high-dimensional spaces.
Density ratio estimation has been applied to various machine learning tasks, including covariate shift adaptation \cite{shimodaira2000improving, sugiyama2007covariate}, mutual information approximation \cite{suzuki2008approximating,suzuki2009mutual}, and causal inference \cite{yamada2010dependence}.

The Bregman divergence framework that we employed for DDRO was originally established in \cite{sugiyama2012density} and is one of the most general framework for density ratio estimation.
This framework encompasses various density ratio estimation methods depending on the choice of Bregman divergence (i.e., the convex function $f$) and how the density ratio model is parameterized.
Notable examples include Least Squares Importance Fitting (LSIF) \cite{kanamori2009least},
Kernel Mean Matching (KMM) \cite{gretton2008covariate},
and Kullback-Leibler Importance Estimation Procedure (KLIEP) \cite{sugiyama2007direct}.
Basically, we may employ any density ratio estimator for the preference optimization.
However, we employ the logistic loss for our implementation due to its simplicity.

\subsection{Practical modification of DDRO}
\label{subsec:modified_ddro_for_practical_use}

While the DDRO is effective in terms of consistency as shown in Theorem \ref{prop:COnsistencyBregman}
and Theorem \ref{thm:statistical_consistency} below,  %
we found that its training becomes unstable in preliminary experiments due to the appearance of gradient spikes during training (see Appendix~\ref{appendix:ablation_studies}).  
To mitigate this instability, we propose a practical variant of the DDRO loss 
that introduces a monotonically increasing %
function $S$ in the loss of each data point:  %
\begin{align}
    \hat{\mathcal{L}}_{\text{DDRO}}(\theta) = ~ &\frac{1}{n^+} \sum_{z_i^+ \in \mathcal{D}_+} \left[S\left( \log(1 + g_{\theta}(z_i^+)) \right) \right. \\
    &\left. ~~~~~~~~ + \gamma t g_{\theta}(z_i^+)^{-1} \log g_{\theta}(z_i^+)
     \right] \\
    &+ \frac{1}{n^-} \sum_{z_i^- \in \mathcal{D}_-} \left[ S\left( \log(1 + g_{\theta}^{-1}(z_i^+)) \right) \right. \\
    & \left. ~~~~~~~~ + \gamma (1-t) g_{\theta}^{-1}(z_i^-) \log g_{\theta}(z_i^-) \right].
    \label{eq:modified_ddro_loss_rewritten}
\end{align}
We employed $S(x) = \log(\sigma(x))$ for the sigmoid function $\sigma$ in our experiments, which showed good empirical results. We note that we may choose other possibilities for $S$. 
We provide a comparison of different $S$ in Appendix~\ref{appendix:ablation_studies}.
The rationale behind the choice of log-sigmoid function is as follows: 
Applying $S$ clips large values of $g_\theta$ and its gradient, which prevents gradient spikes and also works as regularization, so that $p_\theta$ does not go far away from the original reference model $\pref$. Indeed, we observed that this modification can mitigate gradient spikes (see Appendix \ref{appendix:ablation_studies}).

\section{Theoretical Analysis}
\begin{table*}[htbp]
    \centering
    \caption{Unified perspective on preference optimization objectives. Existing methods apply (partially) increasing and (partially) convex functions such as sigmoid $\sigma(\cdot)$, quadratic, $- \log \sigma((\mathrm{const.}) - \cdot)$. For simplicity, we omit the regularization terms.}
    \begin{tabular}{lcc}
        \toprule %
        \textbf{Method} & \textbf{Reference} & \textbf{Loss Function} \\
        \midrule %
        DPO & \citealt{rafailov2024direct} & $-\log\sigma(a - b)$ \\
        IPO & \citealt{azar2024general} & $(a - b - 1)^2$ \\
        SPPO & \citealt{wu2024self} & $(a - 1/2)^2 + (-b - 1/2)^2$ \\
        KTO & \citealt{ethayarajh2024kto} & $\sigma(-a) + \sigma(b)$ \\
        BCO & \citealt{jung2024binary} & $-\log\sigma(a - \delta) - \log\sigma(-b - \delta)$ \\
        DDRO & Ours & $-a - \tilde b + (\mathrm{const.})$ \\
        \bottomrule %
    \end{tabular}
    \label{table:existing_losses}
\end{table*}

In this section, we give some theoretical analysis of our proposed method. 
First, we give the error bound of the estimated distribution via the density ratio estimator,
which gives justification to use the Bregman divergence loss. 
Next, we investigate a connection to existing approaches by studying how the preferred data distribution and the unpreferred data distribution interact in the loss.

\subsection{Estimation error bound}

First, we give an estimation error bound of our method with respect to the $L^2$-norm that includes the model misspecification error and a generalization error. 
Here, let $\Theta$ be the set of parameters of our model and its corresponding set of distributions be $\mathcal{H} = \{p_\theta \mid \theta \in \Theta \}$. 
Let the {\it Rademacher complexity} of the model $\mathcal{H}$ be denoted by $\mathcal{R}_n(\mathcal{H})$:
\begin{equation}
\mathcal{R}_n(\mathcal{H}) \!\!
:= \!\!\max_{p \in p^+,p^-} \mathbb{E}_{\mathcal{D} \sim p^{n}}\left[ \mathbb{E}_{\sigma}\left[ \sup_{\theta \in \Theta} 
\left| \frac{1}{n} \sum_{z_i \in \mathcal{D}} \sigma_i p_\theta(z^i) \right| 
\right] \right], 
\end{equation}
where $(\sigma_i)_{i=1}^n$ is an i.i.d. Rademacher sequence, that is, $P(\sigma_i = 1)= P(\sigma_i = -1)=1/2$ (see \citet{mohri2012foundations} for a reference). 
It is known that the Rademacher complexity measures the variance of the empirical risk minimizer that reflects the complexity of the model. 
Let $\tilde p = \frac{1}{1-t} \pref - \frac{t}{1-t} p$ for $p \in \mathcal{H}$, and 
let $h^\pm:\mathbb{R} \to\mathbb{R}$ be $h^+(p) = - f\left(\tilde p / p\right) + \left(\tilde p / p\right) f'\left(\tilde p / p\right)$ and $ 
    h^-(p) = -f'\left(\tilde p / p\right)$. 
Then, we have the following theorem. 
\begin{theorem}\label{thm:statistical_consistency}
Let $\hat{\theta} = \arg \min_{\theta \in \Theta} \mathcal{L}_{\mathrm{DDRO}}(\theta)$ be the optimal solution of the DDRO loss function with $\gamma = 0$. 
We assume that there exists some open bounded interval $I \subset \mathbb{R}$ such that $\tilde{p}(y|x) / p(y|x) \in I$ for all $p \in \mathcal{H}$, $x \in \mathcal{X}$, and $y \in \mathcal{Y}$.
We also assume that $h^+$ and $h^-$ are Lipschitz continuous on the interval $I$ with a Lipschitz constant $\mathrm{Lip}(h)$.
Then, the estimation error of $\pthetahatn$ with respect to the true preferred distribution $p^+$ satisfies:
\begin{align}
   &  \mathbb{E}_{\mathcal{D_{\pm}}}\left[\left\| \pthetahatn - p^+ \right\|_{L^2(p^+)}^2\right] \leq 
    \frac{2(1-t)^2}{t^2 m_+^2 \mu}  \times \\
   & ~~~~\left[ 
    4 \mathrm{Lip}(h)(\mathcal{R}_{n^+}(\mathcal{H}) + \mathcal{R}_{n^-}(\mathcal{H})) 
   + \inf_{\theta \in \Theta}  \mathcal{L}_{\mathrm{Breg}}(g_\theta)\right], 
\end{align}
where the expectation is taken over the realization of the supervised data $\mathcal{D}_{\pm}$, $m_+ := \min_{x \in \mathcal{X}, y \in \mathrm{supp}(p^+(\cdot \mid x))} p^+(y \mid x)> 0$, and $\mu := \inf_{s \in I} f''(s) > 0$.
\end{theorem}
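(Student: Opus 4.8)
The plan is to bound the $L^2$-error by the Bregman loss evaluated at $\hat\theta$, and then control that loss by a standard excess-risk decomposition into approximation error and generalization error. First, I would prove a pointwise quadratic lower bound for the Bregman divergence: since $f$ is twice differentiable with $f'' \ge \mu > 0$ on the interval $I$, a second-order Taylor expansion of $f$ around $g_\theta$ gives $\mathrm{Breg}_f(g_\theta \| g^*) \ge \tfrac{\mu}{2}(g^* - g_\theta)^2$ for all relevant arguments (all of which lie in $I$ by the boundedness assumption). Integrating against $p^+ \times p_{\mathrm x}$ yields $\mathcal{L}_{\mathrm{Breg}}(\theta) \ge \tfrac{\mu}{2}\,\mathbb{E}_{p^+}[(g^*-g_\theta)^2] = \tfrac{\mu}{2}\,\|g_\theta - g^*\|_{L^2(p^+)}^2$. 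Then I would translate the density-ratio error back to the distribution error: from $g_\theta = \tfrac{1}{1-t}\tfrac{\pref}{\ptheta} - \tfrac{t}{1-t}$ and the analogous identity for $g^*$ in terms of $p^+$, a direct computation gives $g^* - g_\theta = \tfrac{t}{1-t}\cdot\tfrac{\pref}{p^+\ptheta}(\ptheta - p^+)$ (up to sign), so $|g^*-g_\theta| \ge \tfrac{t}{1-t}\cdot\tfrac{1}{m_+}|\ptheta - p^+|$ using $\pref \ge t\,p^+ \ge t\,m_+$ on the support and $\ptheta \le 1$. Combining, $\|\ptheta - p^+\|_{L^2(p^+)}^2 \le \tfrac{(1-t)^2 m_+^2}{t^2}\|g_\theta - g^*\|_{L^2(p^+)}^2 \le \tfrac{2(1-t)^2 m_+^2}{t^2 \mu}\,\mathcal{L}_{\mathrm{Breg}}(\theta)$; I need to double-check the exponent on $m_+$ against the stated bound, but the structure is exactly this.

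Next I would handle the excess risk $\mathbb{E}[\mathcal{L}_{\mathrm{Breg}}(\hat\theta)]$. Using the expansion of the Bregman loss derived in the paper, $\mathcal{L}_{\mathrm{Breg}}(\theta) = \mathbb{E}_{p^+}[h^+(\ptheta)] - \mathbb{E}_{p^-}[h^-(\ptheta)] + \text{const}$, where $h^\pm$ are exactly the functions defined just before the theorem (noting $\tilde p / p$ is the argument that equals $g_\theta$), and $\hat{\mathcal{L}}_{\mathrm{Breg}}$ is the obvious empirical counterpart with $\hat\theta$ its minimizer (since $\gamma=0$). A standard symmetrization argument gives $\mathbb{E}_{\mathcal D_\pm}\big[\sup_\theta |\hat{\mathcal L}_{\mathrm{Breg}}(\theta) - \mathcal L_{\mathrm{Breg}}(\theta)|\big] \le 2\big(\mathbb{E}\sup_\theta|\tfrac1{n^+}\sum \sigma_i h^+(\ptheta(z_i^+)) - \mathbb{E}[h^+]| + \text{analogous }h^-\text{ term}\big)$, and then the contraction (Ledoux--Talagrand) inequality strips off the $\mathrm{Lip}(h)$-Lipschitz functions $h^\pm$ to reduce each term to the Rademacher complexity of $\mathcal H = \{\ptheta\}$, giving $\le 2\,\mathrm{Lip}(h)(\mathcal R_{n^+}(\mathcal H) + \mathcal R_{n^-}(\mathcal H))$ per side, hence $4\,\mathrm{Lip}(h)(\mathcal R_{n^+} + \mathcal R_{n^-})$ total after the factor of $2$. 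Finally, the usual ERM comparison $\mathcal L_{\mathrm{Breg}}(\hat\theta) \le \inf_\theta \mathcal L_{\mathrm{Breg}}(\theta) + 2\sup_\theta|\hat{\mathcal L}_{\mathrm{Breg}}(\theta) - \mathcal L_{\mathrm{Breg}}(\theta)|$ in expectation closes the loop, and combining with the first paragraph's inequality (absorbing the factor $2$ from the quadratic bound) produces the stated result.

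The main obstacle I anticipate is bookkeeping the constants and the multi-sample structure cleanly: the empirical loss is a difference of two averages over two independent datasets $\mathcal D_+$ and $\mathcal D_-$ of differing sizes, so the symmetrization must be applied separately to each average and the $\max_{p\in\{p^+,p^-\}}$ in the definition of $\mathcal R_n(\mathcal H)$ is what lets a single quantity $\mathcal R_{n^\pm}$ dominate both the $h^+$-term (expectations under $p^+$) and, after noting $h^-$ is integrated under $p^-$ but also appears inside $h^+$'s definition only through $g_\theta$, the cross terms. I would be careful that the contraction inequality applies to $h^\pm$ as functions of the real-valued quantity $\ptheta(z_i)$ — which requires that $\ptheta(z_i) \in \mathcal H$ is genuinely scalar-valued per point, true here — and that the argument $\tilde p/p$ stays in $I$ so Lipschitzness is usable. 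The rest (the Taylor bound, the algebra relating $g$-error to $p$-error) is routine given Assumptions~\ref{assumption:response_set} and~\ref{assumption:ref_dist} and the positivity of $m_+$ and $\mu$.
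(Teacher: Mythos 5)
Your proposal follows essentially the same route as the paper's proof: a strong-convexity (second-order Taylor) lower bound relating the Bregman loss to $\|g_\theta-g^*\|_{L^2(p^+)}^2$ (the paper cites Lemma~4 of \citealt{kato2021non} for exactly this), the ratio-to-density algebra producing the $t$, $m_+$ constants, and the standard ERM decomposition with symmetrization plus Ledoux--Talagrand contraction yielding $4\,\mathrm{Lip}(h)(\mathcal{R}_{n^+}(\mathcal{H})+\mathcal{R}_{n^-}(\mathcal{H})) + \inf_\theta\mathcal{L}_{\mathrm{Breg}}(g_\theta)$. The only correction needed is the algebra you flagged: the identity is $g_\theta - g^* = \frac{\pref}{(1-t)\,\ptheta\, p^+}\,(p^+-\ptheta)$ (no extra factor $t$), and bounding $\pref \ge t\,m_+$ together with $\ptheta\, p^+ \le 1$ gives $|g_\theta - g^*| \ge \frac{t\, m_+}{1-t}\,|\ptheta - p^+|$, which places $m_+^2$ in the denominator of the final constant, $\frac{2(1-t)^2}{t^2 m_+^2\mu}$, as stated in the theorem rather than in the numerator as in your sketch.
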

The proof is given in Appendix~\ref{appendix:proof_thm_stat_consistency}. 
Since usual Rademacher complexity bounds yield $\mathcal{R}_n(\mathcal{H}) = \mathcal{O}(1/\sqrt{n})$~\citep{mohri2012foundations}, the right hand side can be evaluated as 
$\mathcal{O}\left(\frac{1}{\sqrt{\min\{n^+,n^-\}}} + \inf_{\theta \in \Theta}  \mathcal{L}_{\mathrm{Breg}}(g_\theta)\right)$. 
The term $\inf_{\theta \in \Theta}  \mathcal{L}_{\mathrm{Breg}}(g_\theta)$ represents how far the model is misspecified from the true distribution. 
If the true preferred data distribution is included in the model, this term vanishes.
Hence, we see that the bound represents the bias and variance trade-off to estimate an appropriately aligned model. 
As the data size increases, DDRO becomes more accurate even though we are not putting any assumption on the human preference model.

\begin{figure*}[t]
    \centering
    \begin{minipage}{0.3\textwidth}
        \centering
        \includegraphics[width=\textwidth]{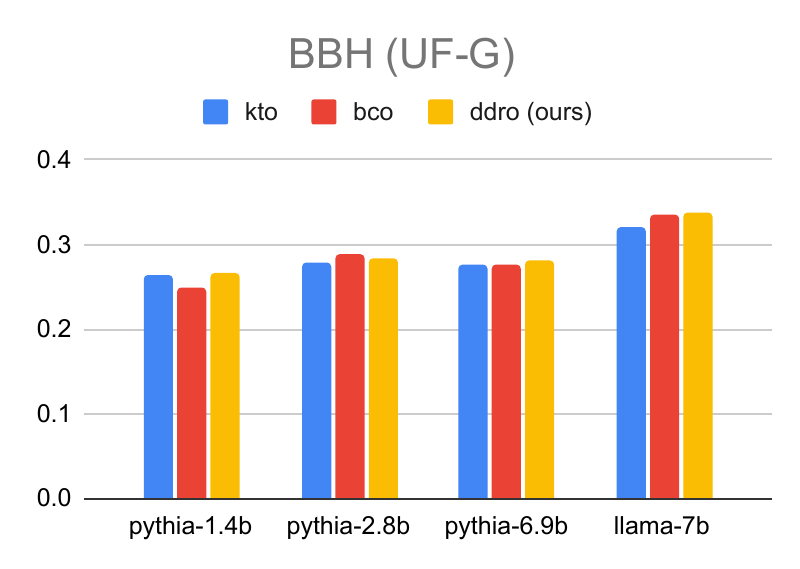}
    \end{minipage}%
    \begin{minipage}{0.3\textwidth}
        \centering
        \includegraphics[width=\textwidth]{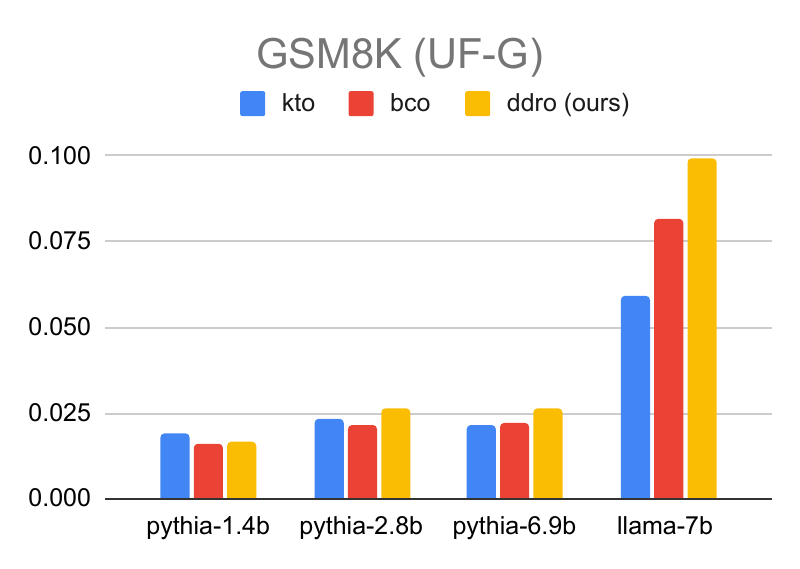}
    \end{minipage}
    \begin{minipage}{0.3\textwidth}
        \centering
        \includegraphics[width=\textwidth]{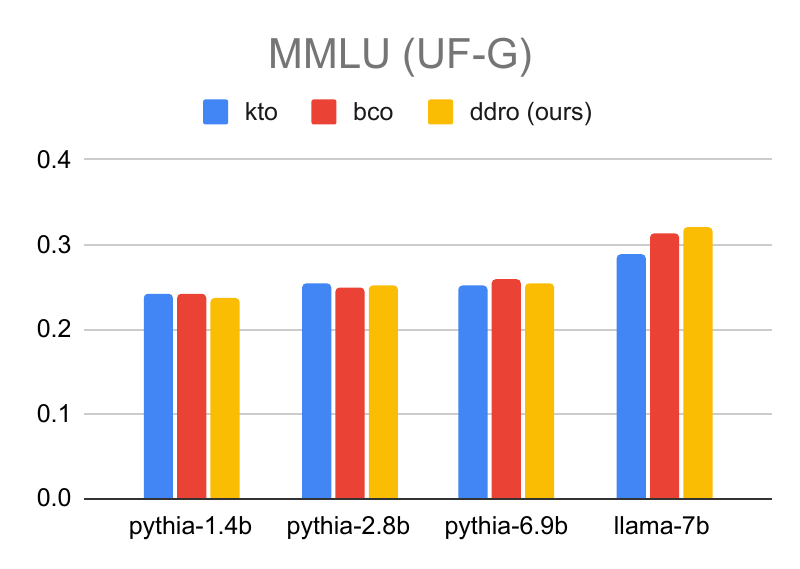}
    \end{minipage}%
    \\
    \centering
    \begin{minipage}{0.3\textwidth}
        \centering
        \includegraphics[width=\textwidth]{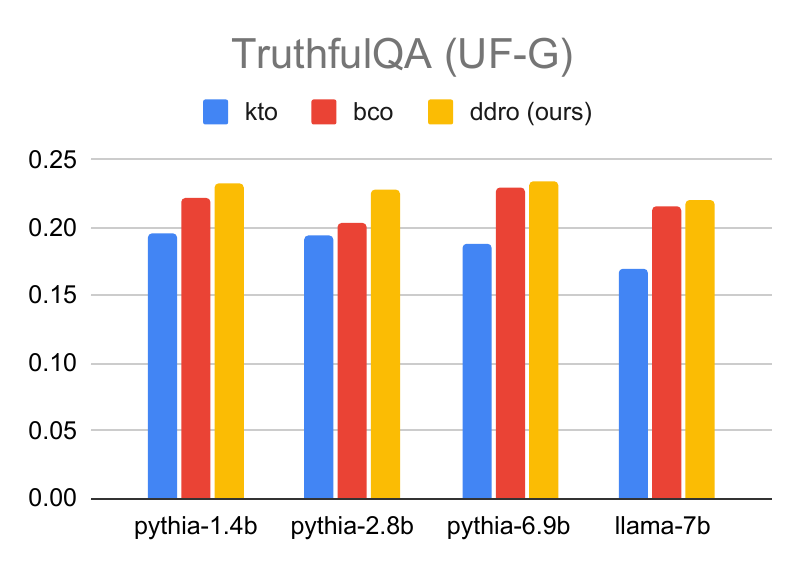}
    \end{minipage}
    \begin{minipage}{0.3\textwidth}
        \centering
        \includegraphics[width=\textwidth]{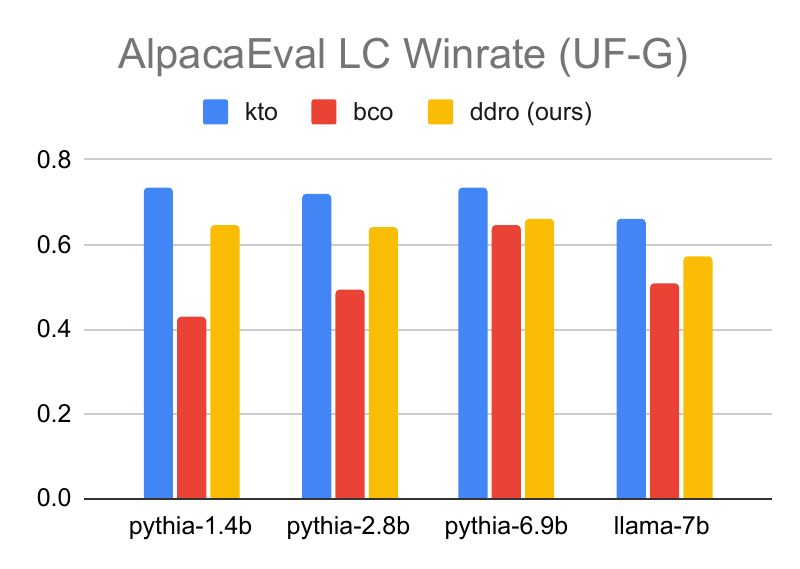}
    \end{minipage}%
    \caption{Performance comparison of three  methods (KTO, BCO, DDRO) on various benchmarks (BBH, GSM8K, MMLU, TruthfulQA, and AlpacaEval LC Winrate). The methods are applied to four different model sizes: Pythia 1.4B, Pythia 2.8B, Pythia 6.9B, and LLaMA 7B, each using the UF-G dataset. }
    \label{fig:benchmarks_ufg}
\end{figure*}

\subsection{Connection to existing methods}

To elucidate the connection between our DDRO framework and existing methods, we first consider a simplified version of our loss by setting $t=1/2$ and $\gamma=0$. Under these conditions, the DDRO loss \eqref{eq:ddro_loss} is reduces to 
\begin{align}
    \mathcal{L}_\text{DDRO}(\theta) \!\!=\! \mathbb{E}_{p^+}\!\left[\log 2 - \log\frac{\ptheta}{\pref}\right] \!\!+\! \mathbb{E}_{p^-}\!\left[\log 2 - \log\frac{\tilde\ptheta}{\pref}\right]. 
\end{align}
To establish a unified comparison, we follow the single-pair setup from \citet{wu2024self} where only one preference pair $(x, y^+, y^-)$ is available. 
Define the log-density-ratio terms on the data points $y^+$ and $y^-$ as 
\begin{align}
    a = \log \frac{\ptheta(y^+ \mid x)}{\pref(y^+ \mid x)}, \quad b = \log \frac{\ptheta(y^- \mid x)}{\pref(y^- \mid x)}.
\end{align}
Then, the existing approaches can be summarized in Table~\ref{table:existing_losses}. 
We see that existing methods can be expressed as $S(-a + b)$ or $S(-a) + S(b)$ using specific (partially) increasing and (partially) convex functions $S(\cdot)$.

On the other hand, the simplified DDRO can be expressed as $-a - \tilde b + (\mathrm{const.})$, where
$\tilde{b}$ is defined by 
\begin{align}
    \tilde b = \log \frac{\ptilde(y^- \mid x)}{\pref(y^- \mid x)}.
\end{align}
The biggest difference from existing methods is that,  
while existing ones try to {\it reduce} fitting of $p_\theta$ to unpreferred data by decreasing $b$, 
DDRO tries to {\it increase} fitting of $\ptilde$ to the unpreferred data by increasing $\tilde{b}$. 
Without any assumption on the human preference model, decreasing the value of $b$ does not necessarily lead to good fit on the preferred data. On the other hand, our method gives good fit of both $p_\theta$ and $\ptilde$ to preferred and unpreferred data respectively, leading to its better explanation of the whole data. 
This property stems from the proper specification of the loss derived from the Bregman divergence that  yields statistical consistency as shown in the previous section.

\section{Experiments} \label{sec:experiments}

In this section, we evaluate our proposed method on two datasets: UF-G (unpaired) and UF-B (paired). We first demonstrate that DDRO can achieve performance comparable to or better than existing unpaired methods (KTO, BCO) on UF-G. Next, we investigate how well DDRO can approach the performance of a paired method (DPO) by converting the paired UF-B dataset into an unpaired format for DDRO training. Surprisingly, DDRO remains comparable or superior to DPO on all benchmarks, despite losing information when transforming paired data into unpaired data.

\subsection{Experimental Setup}

\paragraph{Datasets.}
We conduct experiments on two publicly available datasets for LLM alignment: 
ultrafeedback-gpt-3.5-turbo-helpfulness (UF-G) and ultrafeedback\_binarized (UF-B). Both of these datasets are available through \texttt{datasets} library \cite{lhoest-etal-2021-datasets}.
The UF-G dataset is a filtered version of the UltraFeedback dataset \cite{cui2023ultrafeedback}, specifically filtered by helpfulness. After filtering, the UF-G dataset is transformed into an unpaired dataset.
The UF-B dataset is a preference dataset built from the GPT-4 scored UltraFeedback dataset (64k prompts, each with 4 completions). It is created by binarizing the dataset, where the highest scoring completion is labeled as ``chosen'' and a random completion is labeled as ``rejected.'' This makes UF-B a paired dataset. 

\textbf{Models.}
We train the following base pre-trained LLMs: Pythia (1.4B, 2.8B, 6.9B) \cite{biderman2023pythia}, LLaMA 7B \cite{touvron2023llama}.

\paragraph{Benchmarks.}
We evaluate performance on several downstream benchmarks that assess helpfulness, correctness, or factual consistency. These benchmarks include BBH \cite{suzgun2022challenging}, which consists of a subset of BIG-Bench \cite{srivastava2022beyond} tasks; GSM8K \cite{cobbe2021training}, which features grade-school math problems; MMLU \cite{hendryckstest2021}, a multi-task language understanding benchmark; TruthfulQA \cite{lin-etal-2022-truthfulqa}, which evaluates the truthfulness of responses; and AlpacaEval LC Winrate \cite{dubois2024length}, which focuses on length-controlled preference evaluation. We employ \texttt{Llama3-70B-Instruct} \cite{llama3modelcard} as an evaluator to determine preferences between responses.

\begin{figure*}[htbp]
    \centering
    \begin{minipage}{0.3\textwidth}
        \centering
        \includegraphics[width=\textwidth]{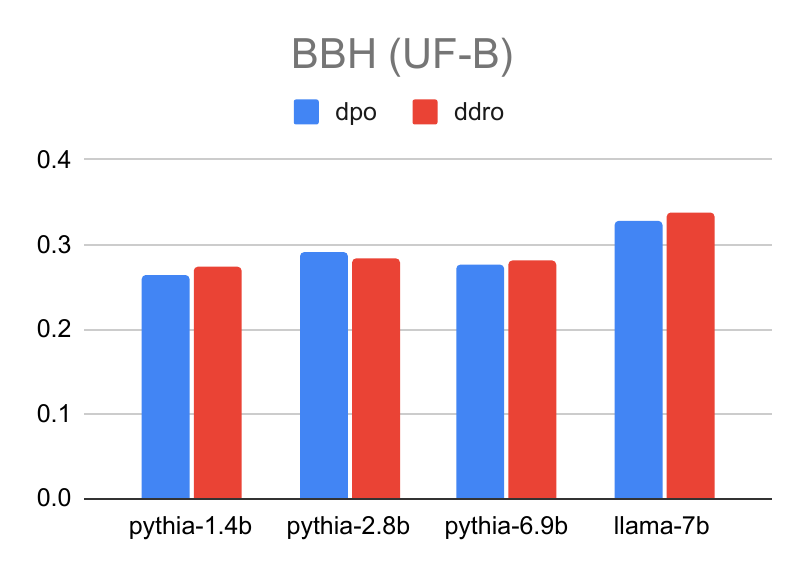}
    \end{minipage}%
    \begin{minipage}{0.3\textwidth}
        \centering
        \includegraphics[width=\textwidth]{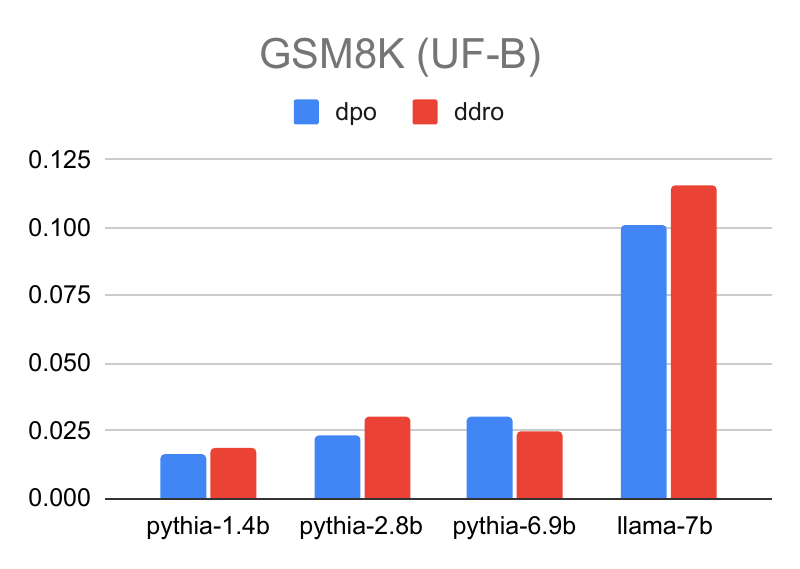}
    \end{minipage}
    \begin{minipage}{0.3\textwidth}
        \centering
        \includegraphics[width=\textwidth]{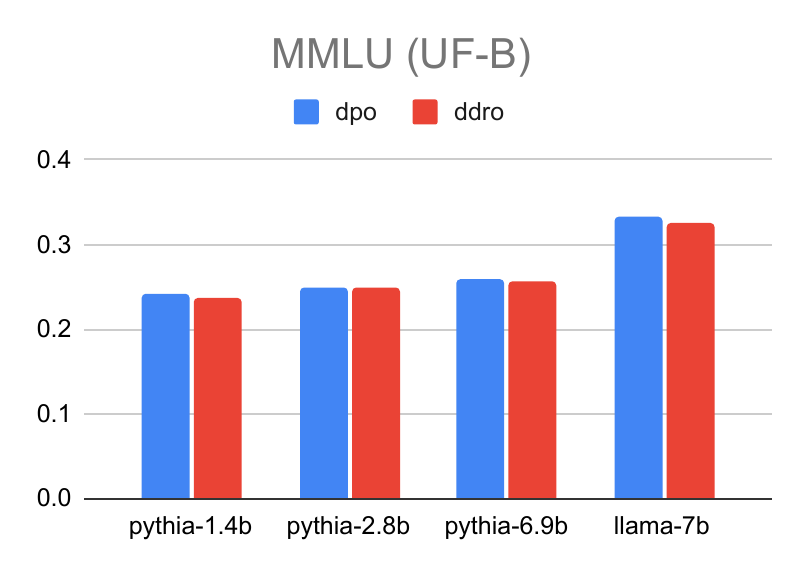}
    \end{minipage}%
    \\
    \centering
    \begin{minipage}{0.3\textwidth}
        \centering
        \includegraphics[width=\textwidth]{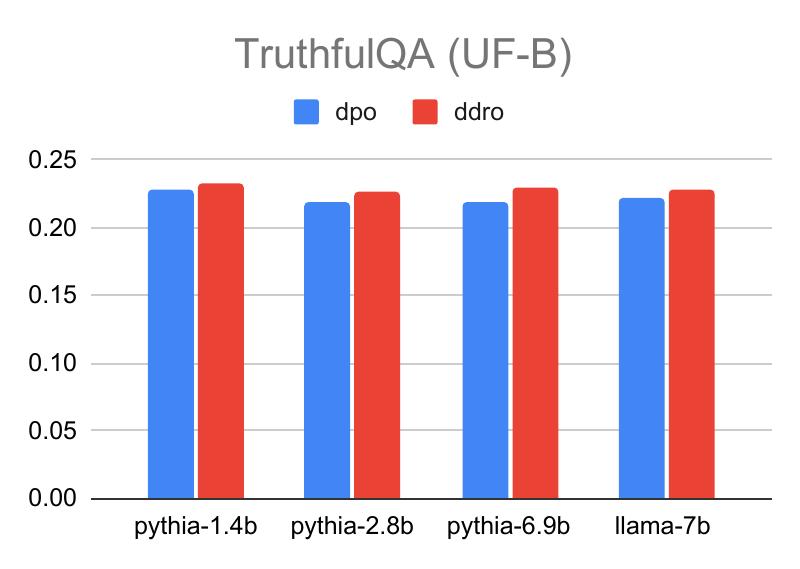}
    \end{minipage}
    \begin{minipage}{0.3\textwidth}
        \centering
        \includegraphics[width=\textwidth]{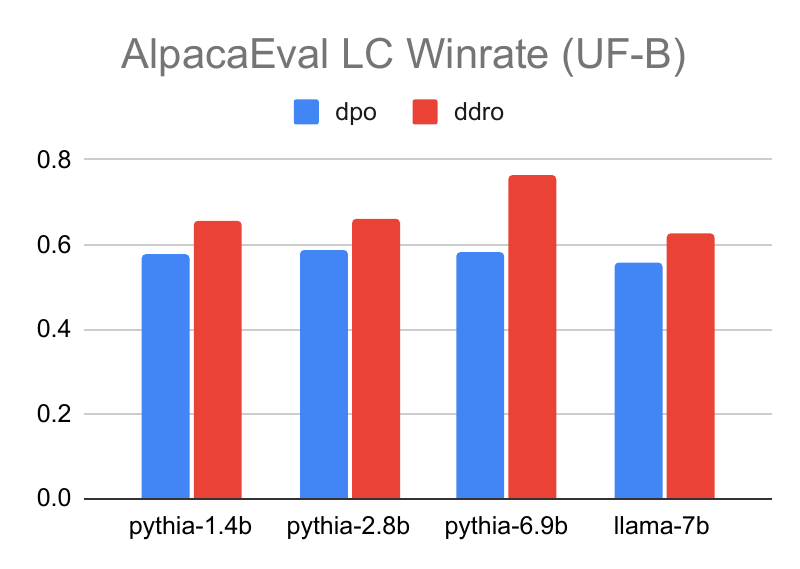}
    \end{minipage}%
    \caption{Performance comparison of DDRO and DPO on various benchmarks (BBH, GSM8K, MMLU, TruthfulQA, and AlpacaEval LC Winrate). The methods are applied to four different model sizes: Pythia 1.4B, Pythia 2.8B, Pythia 6.9B, and LLaMA 7B, each using the UF-B dataset. }
    \label{fig:benchmarks_ufb}
\end{figure*}

\subsection{Results on UF-G (Unpaired Dataset)}

We first compare DDRO with existing unpaired methods. Figure \ref{fig:benchmarks_ufg} shows that DDRO achieves performance on par with or exceeding KTO and BCO across four out of five benchmarks.  
One exception is observed in the AlpacaEval LC Winrate benchmark, where KTO slightly outperforms DDRO. 
These results confirm that DDRO is highly competitive in the unpaired setting.

\subsection{Results on UF-B (Paired Dataset) vs.\ DPO}

Next, we examine how effectively DDRO, an unpaired method, can approach or match the performance of a paired method (DPO). For DDRO, we convert the paired samples into unpaired form by taking each prompt-completion pair separately (and discarding explicit pairwise preference links). This artificially creates an unpaired dataset from originally paired data.

Figure \ref{fig:benchmarks_ufb} presents an interesting result: despite converting a paired dataset into an unpaired form and thus discarding direct pairwise preference information, DDRO consistently performs comparably to or slightly better than DPO across all evaluated benchmarks (BBH, GSM8K, MMLU, TruthfulQA, AlpacaEval). This outcome is noteworthy because DDRO, by design an unpaired approach, manages to achieve performance on par with or exceeding a paired method even under conditions of inherent information loss, suggesting that the direct density ratio estimation technique effectively leverages the remaining preference signals.

\section{Conclusion}

In this paper, we addressed the critical issue of statistical inconsistency in the alignment of large language models (LLMs) and proposed Direct Density Ratio Optimization (DDRO), a novel framework for alignment based on direct density ratio estimation from data. This framework overcomes the limitations of existing methods that rely on specific preference model assumptions. By directly optimizing the density ratio between preferred and unpreferred distributions using Bregman divergence loss, DDRO theoretically guarantees statistical consistency (Theorem~\ref{thm:statistical_consistency}).

Experimental results on multiple benchmarks (BBH, GSM8K, MMLU, TruthfulQA, and AlpacaEval LC Winrate) confirm that DDRO compares favorably with prior methods, achieving on-par or better performance. It is noteworthy that in comparisons using paired datasets, it achieved results comparable to or better than DPO, despite DPO's ability to leverage preference information.

Overall, our findings highlight the potential of DDRO to address fundamental challenges in LLM alignment. By discarding parametric assumptions on human preferences, DDRO enables truly data-driven alignment, leading to more reliable and human-aligned LLMs.

\newpage

\section*{Acknowledgements}
RH was partially supported by JST CREST (JPMJCR2015). TS was partially supported by JSPS KAKENHI (24K02905) and JST CREST (JPMJCR2115). This research is supported by the National Research Foundation, Singapore, Infocomm Media Development Authority under its Trust Tech Funding Initiative, and the Ministry of Digital Development and Information under the AI Visiting Professorship Programme (award number AIVP-2024-004). Any opinions, findings and conclusions or recommendations expressed in this material are those of the author(s) and do not reflect the views of National Research Foundation, Singapore, Infocomm Media Development Authority, and the Ministry of Digital Development and Information.

\section*{Impact Statement}

This paper presents work whose goal is to advance the field of
Machine Learning. There are many potential societal consequences
of our work, none which we feel must be specifically highlighted here.

\bibliography{main}
\bibliographystyle{icml2025}

\newpage
\appendix
\onecolumn

\section{Proof of Proposition \ref{prop:BT_model_inconsistency}} \label{appendix:proof_prop_BT_inconsistency}
\begin{proof}
We aim to demonstrate the existence of a class of preferences for which no member can be obtained through optimization under the Bradley-Terry model assumption. To this end, let us consider a set of prompts $\mathcal{X} = \{x\}$ and a set of responses $\mathcal{Y} = \{y_a, y_b, y_c\}$.

We define a class of preferences $\mathcal{C}_{\text{pref}}$  parametrized by $t \in [0, 1/2) \cup (1/2, 1]$. Each preference $P_t \in \mathcal{C}_{\text{pref}}$ is characterized by the specific pairwise comparison probabilities:
\begin{align}
    \mathrm{Pr}[y_a \succ y_b | x] = \mathrm{Pr}[y_b \succ y_c | x] = \mathrm{Pr}[y_c \succ y_a | x] = t.
\end{align}

If a preference $P_t$ were obtainable, there must exist a reward $r_\phi$ such that 
\begin{align}
    \sigma\left(r_\phi(x, y_a) - r_\phi(x, y_b)\right) = \sigma\left(r_\phi(x, y_b) - r_\phi(x, y_c)\right) = \sigma\left(r_\phi(x, y_c) - r_\phi(x, y_a)\right) = t.
\end{align}
Here $\sigma$ is a sigmoid function. Due to the strict monotonicity of $\sigma$, it follows that
\begin{align}
        r_\phi(x, y_a) - r_\phi(x, y_b) = r_\phi(x, y_b) - r_\phi(x, y_c) = r_\phi(x, y_c) - r_\phi(x, y_a).
\end{align}

Since these differences add up to $0$ by canceling each other out, we have 
\begin{align}
    r_\phi(x, y_a) - r_\phi(x, y_b) = r_\phi(x, y_b) - r_\phi(x, y_c) = r_\phi(x, y_c) - r_\phi(x, y_a) = 0,
\end{align}
which implies $r_\phi(x, y_a) = r_\phi(x, y_b) = r_\phi(x, y_c)$. Then, the model generates a probability:
\begin{align}
    \mathrm{Pr}[y_a \succ y_b | x] = \sigma\left(r_\phi(x, y_a) - r_\phi(x, y_b)\right) = \sigma(0) = 1/2.
\end{align}
This contradicts $t \in [0, 1/2) \cup (1/2, 1]$, which indicates that $P_t$ cannot be obtained by optimizing under the Bradley-Terry model assumption. This completes the proof.
\end{proof}

\section{Proof of Proposition \ref{prop:COnsistencyBregman}} \label{appendix:proof_prop_consistency_breg}
\begin{proof}
We are given that $\hat{\theta}$ minimizes the Bregman divergence loss, $\mathcal{L}_{\text{Breg}}(\theta)$, such that $\mathcal{L}_{\text{Breg}}(\hat{\theta})=0$. The Bregman divergence loss is defined as $\mathcal{L}_{\text{Breg}}(\theta)=\mathbb{E}_{x\sim p_{x},y|x\sim p^{+}}[\text{Breg}_{f}(g_{\theta}||g^{*})]$, where $f$ is a strictly convex function. A fundamental property of the Bregman divergence is that $\text{Breg}_{f}(g_{\theta}||g^{*}) \ge 0$, with equality holding if and only if $g_{\theta} = g^{*}$.
Also, since the expected value of a non-negative random variable is zero if and only if the probability of the random variable taking a non-zero value is zero, $g_{\hat{\theta}}(y|x) = g^{*}(y|x)$ holds $(p^{+} \times p_x)$-almost surely.
The paper establishes a direct relationship, stating that $p_{\theta}=p^{+}$ if and only if $g_{\theta}=g^{*}$. Since we have shown $g_{\hat{\theta}} = g^{*}$ $(p^{+} \times p_x)$-almost surely, it follows that $p_{\hat{\theta}}(y|x) = p^{+}(y|x)$ $(p^{+} \times p_x)$-almost surely. This proves the first assertion of the proposition.

Next, we examine the estimator for the unpreferred distribution, $\tilde{p}_{\theta}$, defined as $\tilde{p}_{\theta} = \frac{1}{1-t}\pref - \frac{t}{1-t}p_{\theta}$. Substituting $\hat{\theta}$ for $\theta$ and using the established result $p_{\hat{\theta}} = p^{+}$, we obtain $\tilde{p}_{\hat{\theta}}(y|x) = \frac{1}{1-t}\pref(y|x) - \frac{t}{1-t}p^{+}(y|x)$. Assumption 3.2 provides the decomposition of the reference distribution as $\pref(y|x) = tp^{+}(y|x) + (1-t)p^{-}(y|x)$. Inserting this into the expression for $\tilde{p}_{\hat{\theta}}$ yields:
\begin{align}
\tilde{p}_{\hat{\theta}}(y|x) = \frac{1}{1-t}[tp^{+}(y|x) + (1-t)p^{-}(y|x)] - \frac{t}{1-t}p^{+}(y|x) = p^{-}(y|x).
\end{align}
This equality holds $(p^{+} \times p_x)$-almost surely, as it relies on the condition $p_{\hat{\theta}}(y|x) = p^{+}(y|x)$, which itself holds $(p^{+} \times p_x)$-almost surely. This demonstrates the second assertion of the proposition.

Thus, we have shown that if $\mathcal{L}_{\text{Breg}}(\hat{\theta})=0$, then both $p_{\hat{\theta}}(y|x)=p^{+}(y|x)$ and $\tilde{p}_{\hat{\theta}}(y|x)=p^{-}(y|x)$ hold $((p^{+}\times p_{x})-a.s.)$.
\end{proof}

\section{Proof of Theorem \ref{thm:statistical_consistency}}

\label{appendix:proof_thm_stat_consistency}

\begin{proof}

    First note that, for any $\theta \in \Theta$, we have that 
    \begin{align}
        \mathcal{L}_{\mathrm{Breg}}(\hat{\theta})
        & = \mathcal{L}_{\mathrm{Breg}}(\hat{\theta}) - \mathcal{L}_{\mathrm{Breg}}(\theta) 
        + \mathcal{L}_{\mathrm{Breg}}(\theta) + 
        \hat{\mathcal{L}}_{\mathrm{Breg}}(\hat{\theta}) - \hat{\mathcal{L}}_{\mathrm{Breg}}(\hat{\theta})
         + \hat{\mathcal{L}}_{\mathrm{Breg}}(\theta) 
        - \hat{\mathcal{L}}_{\mathrm{Breg}}(\theta)  \\
        & = 
        \mathcal{L}_{\mathrm{Breg}}(\theta)
        + 
        \underbrace{(\mathcal{L}_{\mathrm{Breg}}(\hat{\theta}) - \hat{\mathcal{L}}_{\mathrm{Breg}}(\hat{\theta}))}_{(i)}
        - \underbrace{(\mathcal{L}_{\mathrm{Breg}}(\theta)  - \hat{\mathcal{L}}_{\mathrm{Breg}}(\theta))}_{(ii)} 
        + 
        \underbrace{(\hat{\mathcal{L}}_{\mathrm{Breg}}(\hat{\theta}) - \hat{\mathcal{L}}_{\mathrm{Breg}}(\theta) )}_{(iii)}.       
        \label{eq:BregDecompError}
    \end{align}
We see that $(iii) \leq 0$ for any $\theta \in \Theta$ because $\hat{\theta}$ minimizes $\hat{\mathcal{L}}_{\mathrm{Breg}}(\theta)$ in the model. 
Next, we bound the terms $(i)$ and $(ii)$.  
Since both terms can be bounded as 
$$
(i),(ii) \leq \sup_{\theta \in \Theta} \left|
        \hat{\mathcal{L}}_\text{Breg}(\theta) - \mathcal{L}_\text{Breg}(\theta)\right|,
$$
we just need to bound the right hand side.  
Define the functions $h^+(p; (x,y))$ and $h^-(p; (x,y))$ as
\begin{align}
    h^+(p; (x,y)) &= - f\left(\frac{\tilde{p}(y \mid x)}{p(y \mid x)}\right) + \frac{\tilde{p}(y \mid x)}{p(y \mid x)} f'\left(\frac{\tilde{p}(y \mid x)}{p(y \mid x)}\right), \\
    h^-(p; (x,y)) &= -f'\left(\frac{\tilde{p}(y \mid x)}{p(y \mid x)}\right).
\end{align}
Then, we have that 
\begin{align}
    \mathbb{E}_{\mathcal{D}_\pm} \sup_{\theta \in \Theta} \left|
        \hat{\mathcal{L}}_\text{Breg}(\theta) - \mathcal{L}_\text{Breg}(\theta)
    \right| &=
    ~~ \mathbb{E}_{\mathcal{D}_\pm}\left[ \sup_{p \in \mathcal{H}} \left|
        \frac{1}{n^+} \sum_{z^+ \in \mathcal{D}_+} h^+(p; z^+) - \mathbb{E}_{p^+} \left[ h^+(p) \right]
    \right. \right.\\
        &~~~~~~~~~~~~~~~~~~~~~~~~~+ \left. \left.
            \frac{1}{n^-} \sum_{z^- \in \mathcal{D}_-} h^-(p; z^-) - \mathbb{E}_{p^-} \left[ h^-(p) \right]
        \right| \right]\\
    &\le ~~ \mathbb{E}_{\mathcal{D}_+}  \left[\sup_{p \in \mathcal{H}} \left|
        \frac{1}{n^+} \sum_{z_i^+\in \mathcal{D}_+} h^+(p; z_i^+) - \mathbb{E}_{p^+} \left[ h^+(p) \right]
        \right| \right] \\
        &~~~~~~~+ \mathbb{E}_{\mathcal{D}_-} \left[ \sup_{p \in \mathcal{H}} \left| 
            \frac{1}{n^-} \sum_{z_i^- \in \mathcal{D}_-} h^-(p; z_i^-) - \mathbb{E}_{p^-} \left[ h^-(p) \right]
        \right| \right]. 
\end{align}
Here, by the standard symmetrization trick 
(Theorem 3.1 of \citet{mohri2012foundations}) and  
the contraction inequality (Theorem 11.6 of \citet{boucheron2013concentration} or Theorem 4.12 of \citet{Book:Ledoux+Talagrand:1991}), we have
\begin{align}
    \mathbb{E}_{\mathcal{D}_\pm} \left[\sup_{p \in \mathcal{H}} \left|
        \frac{1}{n^{\pm}} \sum_{z_i^\pm \in \mathcal{D}_\pm} h^\pm(p; z_i^\pm) - \mathbb{E}_{p^\pm} \left[ h^\pm(p) \right]
    \right| \right] \le
    2 \mathcal{R}_{n^{\pm}}(h^\pm \circ \mathcal{H})
    \le 2 \mathrm{Lip}(h) \mathcal{R}_{n^{\pm}}(\mathcal{H}).
\end{align}
Finally, the left hand side of Eq.~(\ref{eq:BregDecompError}) can be lower bounded by 
$$ 
\frac{\|\pthetahatn - p^+\|^2_{L^2(p^+)} }
{\frac{2(1-t)^2}{t^2 m_+^2 \mu}} \leq \mathcal{L}_{\mathrm{Breg}}(\hat{\theta}),
$$
from Lemma \ref{lemma:close_bregman}. 
Finally, by taking infimum over $\theta \in \Theta$ in the right hand side of Eq.~(\ref{eq:BregDecompError}), we obtain the assertion. 
\end{proof}

\subsection{Auxiliary lemma}

\begin{lemma}
     \label{lemma:close_bregman}
     Suppose that the same assumption as Theorem \ref{thm:statistical_consistency} holds. 
        For any $p \in \mathcal{H}$, there exists $\mu > 0$ such that
        \begin{align}
            \|p - p^+\|^2_{L^2(p^+)} \leq \frac{2(1-t)^2}{t^2 m_+^2 \mu}  \mathcal{L}_{\mathrm{Breg}}(g)
        \end{align}
        where $m_+ := \min_{x \in \mathcal{X}, y \in \mathrm{supp}(p^+(\cdot \mid x))} p^+(y \mid x) > 0$.
    \end{lemma}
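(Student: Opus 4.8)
\textbf{Proof plan for Lemma \ref{lemma:close_bregman}.}
The goal is to lower bound the Bregman loss $\mathcal{L}_{\mathrm{Breg}}(g) = \mathbb{E}_{x \sim p_{\mathrm{x}}, y|x \sim p^+}[\mathrm{Breg}_f(g_\theta \| g^*)]$ by a multiple of $\|p - p^+\|^2_{L^2(p^+)}$. The plan is to work in three stages: (i) pointwise, bound $\mathrm{Breg}_f(g \| g^*)$ from below by a quadratic in $(g - g^*)$ using strong convexity of $f$; (ii) translate the gap $g - g^*$ back into the gap $p - p^+$ using the affine relationship between them; (iii) assemble the pointwise bound into an $L^2(p^+)$ statement by integrating against $p^+ \times p_{\mathrm{x}}$ and controlling the change of measure.

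For step (i), recall $\mathrm{Breg}_f(g \| \tilde g) = f(\tilde g) - f(g) - f'(g)(\tilde g - g)$, which is exactly the second-order Taylor remainder of $f$ at $g$ evaluated at $\tilde g$. By Taylor's theorem with integral (or Lagrange) remainder, $\mathrm{Breg}_f(g \| g^*) = \tfrac{1}{2} f''(\xi)(g^* - g)^2$ for some $\xi$ between $g$ and $g^*$. Since the assumption of Theorem \ref{thm:statistical_consistency} guarantees $\tilde p(y|x)/p(y|x) \in I$ for all $p \in \mathcal{H}$ and all $x, y$ — and in particular $g^* = p^-/p^+$ corresponds to the choice $p = p^+$, so $g^*$ also lies in $I$ — the intermediate point $\xi \in I$ as well, hence $f''(\xi) \geq \mu := \inf_{s \in I} f''(s) > 0$. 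This yields the pointwise bound $\mathrm{Breg}_f(g_\theta \| g^*)(y|x) \geq \tfrac{\mu}{2}(g_\theta(y|x) - g^*(y|x))^2$.

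For step (ii), use Eq.~\eqref{eq:gstarandrstarrelation} and the definitions: $g_\theta = \tfrac{1}{1-t}\tfrac{\pref}{\ptheta} - \tfrac{t}{1-t}$ and $g^* = \tfrac{1}{1-t}\tfrac{\pref}{p^+} - \tfrac{t}{1-t}$, so $g_\theta - g^* = \tfrac{\pref}{1-t}\bigl(\tfrac{1}{\ptheta} - \tfrac{1}{p^+}\bigr) = \tfrac{\pref}{1-t}\cdot\tfrac{p^+ - \ptheta}{\ptheta p^+}$. Equivalently, using $\pref = t p^+ + (1-t)p^-$, one can write things in terms of $\tilde p_\theta - p^-$; the cleanest route is to note $\ptilde = \tfrac{1}{1-t}\pref - \tfrac{t}{1-t}\ptheta$ so that $\ptheta = g_\theta^{-1} \cdot (\text{something})$... but more directly: from $g_\theta = \tilde p_\theta / p_\theta$ is \emph{not} quite the identity — rather the relevant substitution is to solve $g_\theta + \tfrac{t}{1-t} = \tfrac{1}{1-t}\tfrac{\pref}{\ptheta}$ for $\ptheta$, giving $\ptheta = \tfrac{\pref}{(1-t)g_\theta + t}$. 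Then $p^+ - \ptheta = \pref\bigl(\tfrac{1}{(1-t)g^* + t} - \tfrac{1}{(1-t)g_\theta + t}\bigr) = \pref \cdot \tfrac{(1-t)(g_\theta - g^*)}{((1-t)g^*+t)((1-t)g_\theta+t)}$. Since $(1-t)g^* + t = \pref/p^+$ and $(1-t)g_\theta + t = \pref/\ptheta$, this simplifies to $p^+ - \ptheta = (1-t)(g_\theta - g^*)\cdot \tfrac{p^+ \ptheta}{\pref}$. Hence $|g_\theta - g^*| = \tfrac{\pref}{(1-t)p^+\ptheta}|p^+ - \ptheta| \geq \tfrac{1}{(1-t)}\cdot\tfrac{\pref}{p^+}\cdot\tfrac{1}{\ptheta}|p^+ - \ptheta|$, and I would bound the factors crudely: $\pref \geq t p^+$ pointwise (since $\pref = tp^+ + (1-t)p^- \geq tp^+$), $p^+ \leq 1$, $\ptheta \leq 1$, so $|g_\theta - g^*| \geq \tfrac{t}{(1-t)}|p^+ - \ptheta|$... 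I need to be careful about which direction the inequalities go; the point is that all the multiplicative factors $\pref/p^+$, $1/\ptheta$ are bounded below by positive constants depending only on $t$ and $m_+$, yielding $(g_\theta - g^*)^2 \geq \tfrac{t^2 m_+^2}{(1-t)^2}(p^+ - \ptheta)^2$ pointwise on $\mathrm{supp}(p^+(\cdot|x))$ — the $m_+$ entering because $\pref \geq t p^+ \geq t m_+$ there, which is what compensates the $1/\pref$-type factors. The main obstacle here is getting every constant in the chain $g_\theta - g^* \leftrightarrow p^+ - \ptheta$ correct and in the right direction; the assumption $g \in I$ for a \emph{bounded open} interval $I$ is what keeps all these ratios away from $0$ and $\infty$, and $m_+ > 0$ handles the support restriction.

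For step (iii), integrate the pointwise inequality against $p^+(y|x)p_{\mathrm{x}}(x)$: only $y \in \mathrm{supp}(p^+(\cdot|x))$ contributes, so the restriction to the support is harmless, and we get $\mathcal{L}_{\mathrm{Breg}}(g) \geq \tfrac{\mu}{2}\mathbb{E}_{p^+}[(g_\theta - g^*)^2] \geq \tfrac{\mu}{2}\cdot\tfrac{t^2 m_+^2}{(1-t)^2}\mathbb{E}_{p^+}[(p^+ - \ptheta)^2] = \tfrac{\mu t^2 m_+^2}{2(1-t)^2}\|\ptheta - p^+\|^2_{L^2(p^+)}$. Rearranging gives exactly $\|p - p^+\|^2_{L^2(p^+)} \leq \tfrac{2(1-t)^2}{t^2 m_+^2 \mu}\mathcal{L}_{\mathrm{Breg}}(g)$, completing the proof. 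The only subtlety worth flagging in the writeup is the definition of $\|\cdot\|_{L^2(p^+)}$: since $p^+$ is a conditional distribution and $p_{\mathrm{x}}$ a prompt distribution, this norm should be understood as $\|q\|^2_{L^2(p^+)} = \mathbb{E}_{x \sim p_{\mathrm{x}}}\mathbb{E}_{y|x \sim p^+}[q(y|x)^2]$, matching the measure under which $\mathcal{L}_{\mathrm{Breg}}$ is defined; with that reading the integration in step (iii) is immediate.
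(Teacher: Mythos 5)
Your proof is correct and takes essentially the same route as the paper: both rest on the exact identity $g-g^{*}=\frac{p_{\mathrm{ref}}}{(1-t)\,p\,p^{+}}\,(p^{+}-p)$, the pointwise bounds $p_{\mathrm{ref}}\ge t\,p^{+}\ge t\,m_{+}$ on $\mathrm{supp}(p^{+})$ together with $p\,p^{+}\le 1$, and a strong-convexity lower bound turning $\mathcal{L}_{\mathrm{Breg}}(g)$ into $\tfrac{\mu}{2}\|g-g^{*}\|_{L^{2}(p^{+})}^{2}$, the only difference being that where the paper cites Lemma~4 of Kato et al.\ (2021) for this last step, you prove it inline via Taylor's theorem (the standard argument), and your cruder constant even shows the $m_{+}$ factor could be avoided. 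One shared caveat: taking $\mu=\inf_{s\in I}f''(s)$ for the Taylor intermediate point needs the segment between $g$ and $g^{*}$ to lie in $I$, which you justify by treating $g^{*}$ as the case $p=p^{+}$ and hence implicitly assuming $p^{+}\in\mathcal{H}$ --- not guaranteed under misspecification --- but the same hypothesis is hidden in the paper's appeal to the cited lemma, so this is not a gap specific to your argument.
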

\begin{proof}
    Lemma 4 in \citealt{kato2021non} states that for any $p \in \mathcal{H}$, there exists $\mu > 0$ such that
    \begin{align}
        \|g - g^*\|^2_{L^2(p^+)} \leq \frac{2}{\mu} \mathcal{L}_{\mathrm{Breg}}(g) \label{eq:lemma4_kato}
    \end{align}
    We now demonstrate that when $g$ and $g^*$ are close, $p$ and $p^+$ are also close. Let $x \in \mathcal{X}$ be an arbitrary prompt. For notational simplicity, we omit the conditioning on $x$.
    \begin{align}
        \left\| g - g^* \right\|_{L^2(p^+)}^2 &= \left\| \frac{\ptilde}{p} - \frac{p^-}{p^+} \right\|_{L^2(p^+)}^2 \\
        &= \left\| \frac{\pref}{(1-t)pp^+} \left(p - p^+ \right) \right\|_{L^2(p^+)}^2
    \end{align}
    Due to the finiteness of $\mathcal{Y}$, $p^+$ has a positive minimum value on its support:
    \begin{align}
        m_+ := \min_{y \in \mathcal{Y}} p^+(y) > 0
    \end{align}
    Furthermore, by Assumption \ref{assumption:ref_dist}, on the support of $p^+$, we have $\pref \ge t p^+ \ge t m^+$.
    Consequently:
    \begin{align}
        \left\| g - g^* \right\|_{L^2(p^+)}^2
        &\ge \left\| \frac{\pref}{(1-t)pp^+} \left(p - p^+ \right) \right\|_{L^2(p^+)}^2 \\
        &\ge \left\| \frac{t m_+}{(1-t)pp^+} \left(p - p^+ \right) \right\|_{L^2(p^+)}^2 \\
        &\ge \frac{t^2 m_+^2}{(1-t)^2} \left\| p - p^+ \right\|_{L^2(p^+)}^2
    \end{align}
    By combining this with \eqref{eq:lemma4_kato}, we obtain that 
    \begin{align}
        \frac{t^2 m_+^2}{(1-t)^2} \left\| p - p^+ \right\|_{L^2(p^+)}^2 \le \frac{2}{\mu} \mathcal{L}_{\mathrm{Breg}}(g).
    \end{align}
    Rearranging terms, we find that 
    \begin{align}
        \left\| p - p^+ \right\|_{L^2(p^+)}^2 \le \frac{2(1-t)^2}{t^2 m_+^2 \mu} \mathcal{L}_{\mathrm{Breg}}(g).
    \end{align}
\end{proof}

\section{Training Details}

\label{appendix:training_details}

This section provides details regarding the training procedures employed for each model in our experiments.

\textbf{Training Setup: }
All models were trained using the TRL library \cite{vonwerra2022trl}. The batch size was set to 64, and training was conducted for 1 epoch across all methods and datasets. The AdamW optimizer \cite{loshchilov2017decoupled} was used for optimization.

\textbf{Learning Rates: }
The learning rates for each method were as follows: 5e-7 for DPO, 5e-6 for KTO, 1e-6 for BCO, and 5e-7 for DDRO. For existing methods, we utilized the learning rate values reported in their respective papers \cite{rafailov2024direct,ethayarajh2024kto,jung2024binary}. It is important to note that the DDRO learning rate may not be optimal for every choice of the set $S$. Since the scale of the loss function can vary depending on the selection of $S$, the learning rate might need to be adjusted accordingly.

\textbf{Other Details: }
Following the approach in \citealt{ethayarajh2024kto}, we did not include the KL regularization term in the gradient calculation for training stability.

\section{Impact of the Choice of Smoothing Function $S(x)$ on Training Stability}
\label{appendix:ablation_studies}

To analyze in detail the impact of the choice of the smoothing function $S(x)$, which was introduced in the loss function of our proposed method DDRO (Equation \eqref{eq:modified_ddro_loss_rewritten}) to encourage stable learning, we conducted an ablation study. Theoretically, $S(x)$ is expected to smooth the Bregman Divergence term in the loss function and stabilize the optimization process. As described in Section \ref{subsec:modified_ddro_for_practical_use}, while $S(x)$ should ideally satisfy monotonicity and convexity from a theoretical perspective, practically, it is not always necessary to adhere strictly to these constraints, allowing for the exploration of various functional forms.

\begin{wrapfigure}{r}{0.4\textwidth} %
    \centering %
    \includegraphics[width=\linewidth]{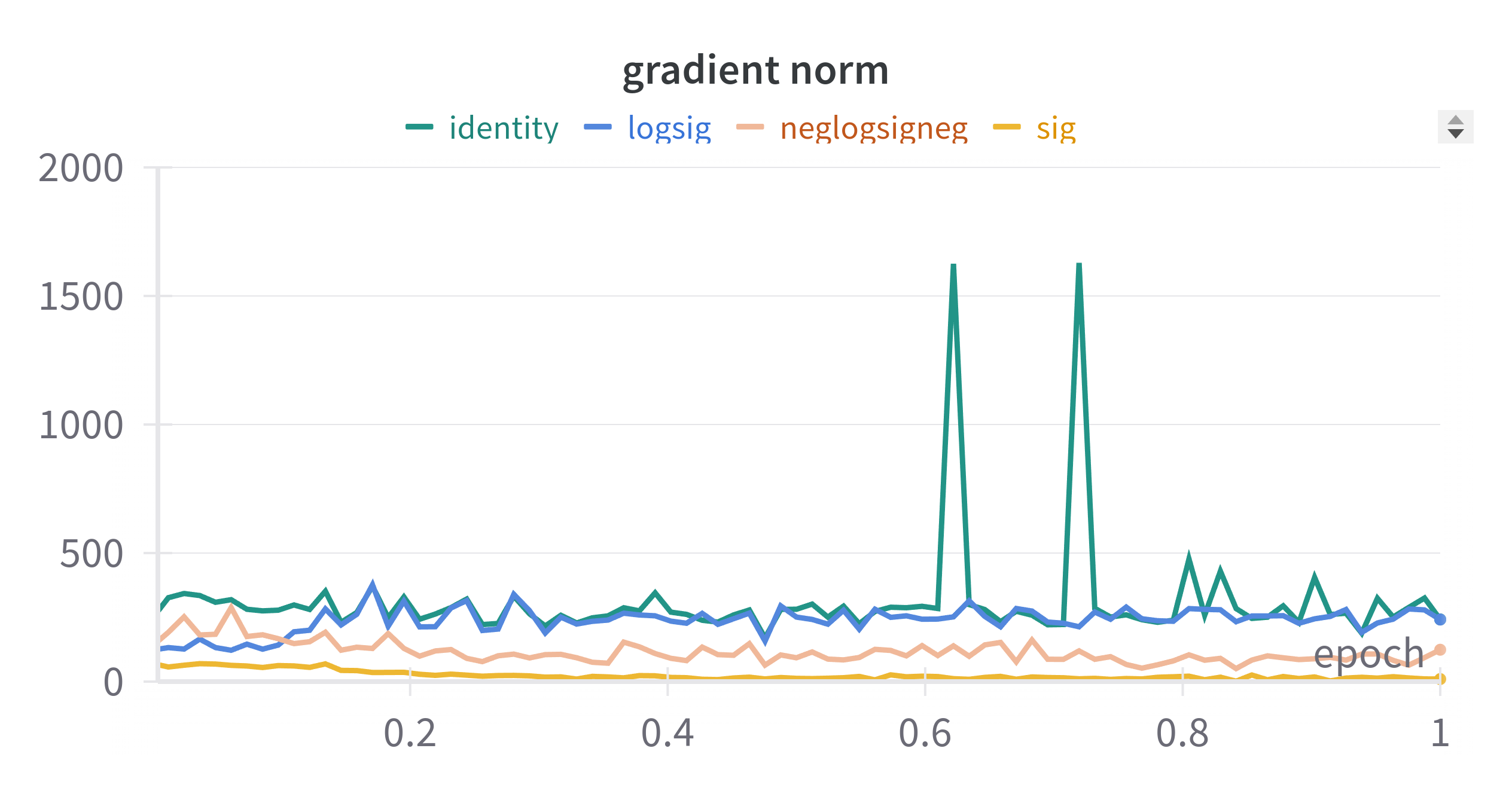} %
    \caption{Gradient norm during training for different smoothing functions $S(x)$ (identity: $S(x) = x$, logsig: $S(x) = \log \sigma(x)$, neglogsigneg: $S(x) = - \log \sigma(-x)$, sig: $S(x) = \sigma(x)$).}
    \label{fig:ablation_smooth_fn}
\end{wrapfigure}

We compared four functional forms for $S(x)$. As a baseline without smoothing, we used the identity function ($S(x) = x$). We also tested $S(x) = \sigma(x), \log\sigma(x), -\log \sigma (-x)$.

Figure \ref{fig:ablation_smooth_fn} illustrates the transition of the gradient norm during training for each choice of $S(x)$.  The figure reveals that with the identity function ($S(x) = x$), gradient spikes occur during training, suggesting unstable learning dynamics. In contrast, when using the other functions, the gradient spikes are suppressed, confirming more stable training.

These results demonstrate the crucial role of smoothing with $S(x)$ for the stable learning of DDRO. The gradient spikes observed with the identity function ($S(x) = x$) indicate that without smoothing, the loss landscape may become complex, potentially leading to convergence to suboptimal solutions or unstable learning.  Conversely, we observed stabilization of learning with any choice of $S$ other than the identity function.

\section{Benchmark Performance Details}
\label{sec:appendix_performance}

This section provides the detailed numerical results for the experiments presented in Section \ref{sec:experiments}. We report performance metrics on five standard benchmarks: BBH, GSM8K, MMLU, TruthfulQA, and AlpacaEval (LC Winrate). The evaluated methods include DPO, KTO, BCO, and our method DDRO and we applied them to Pythia and LLaMA model families.

In Table~\ref{tab:comparison_ufg} and Table~\ref{tab:comparison_ufb}, for each model and benchmark combination, the highest score among the compared methods is indicated in \textbf{bold}.

\begin{table*}[htbp]
\centering
\caption{Comparison of KTO, BCO, and DDRO performance across benchmarks on training models on UF-G dataset (unpaired). DDRO is comparable to or outperforms other methods on all evaluated benchmarks except for AlpacaEval.}
\label{tab:comparison_ufg}
\begin{tabular}{@{}llccccc@{}}
\toprule
\multicolumn{2}{c}{Models} & BBH             & GSM8K           & MMLU            & TruthfulQA      & AlpacaEval      \\ \midrule
Pythia 1.4B    & + KTO     & \textbf{0.2634} & \textbf{0.0190} & \textbf{0.2432} & 0.1958          & \textbf{0.7364} \\
               & + BCO     & 0.2503          & 0.0159          & 0.2422          & 0.2228          & 0.4284          \\
               & + DDRO    & 0.2593          & 0.0167          & 0.2416          & \textbf{0.2313} & 0.5836          \\ \midrule
Pythia 2.8B    & + KTO     & 0.2797          & 0.0235          & \textbf{0.2539} & 0.1946          & \textbf{0.7214} \\
               & + BCO     & 0.2877          & 0.0212          & 0.2504          & 0.2032          & 0.4933          \\
               & + DDRO    & \textbf{0.2854} & \textbf{0.0265} & 0.2532          & \textbf{0.2240} & 0.5740          \\ \midrule
Pythia 6.9B    & + KTO     & 0.2758          & 0.0212          & 0.2526          & 0.1885          & \textbf{0.7341} \\
               & + BCO     & 0.2755          & 0.0220          & 0.2590          & 0.2301          & 0.6469          \\
               & + DDRO    & \textbf{0.2794} & \textbf{0.0265} & \textbf{0.2531} & \textbf{0.2338} & 0.5986          \\ \midrule
LLaMA 7B       & + KTO     & 0.3213          & 0.0591          & 0.2877          & 0.1701          & \textbf{0.6636} \\
               & + BCO     & \textbf{0.3345} & 0.0819          & 0.3123          & 0.2166          & 0.5108          \\
               & + DDRO    & 0.3330          & \textbf{0.0993} & \textbf{0.3182} & \textbf{0.2191} & 0.5280          \\ \bottomrule
\end{tabular}
\end{table*}

\begin{table*}[htbp]
\centering
\caption{Comparison of DPO and DDRO performance across benchmarks on training models on UF-B dataset (paired). DDRO is comparable to or outperforms DPO regardless of the limitation that it cannot directly leverage the paired structure of the data.}
\label{tab:comparison_ufb}
\begin{tabular}{llccccc}
\toprule %
\multicolumn{2}{c}{Models} & BBH             & GSM8K           & MMLU            & TruthfulQA      & AlpacaEval      \\
\midrule %
Pythia 1.4B    & + DPO     & 0.2648          & 0.0167          & \textbf{0.2423} & 0.2277          & 0.5785          \\
               & + DDRO    & \textbf{0.2715} & \textbf{0.0182} & 0.2396          & \textbf{0.2277} & \textbf{0.6609} \\
\midrule %
Pythia 2.8B    & + DPO     & 0.2907          & \textbf{0.0235} & 0.2485          & 0.2191          & 0.5886          \\
               & + DDRO    & \textbf{0.2910} & 0.0205          & \textbf{0.2582} & \textbf{0.2326} & \textbf{0.6695} \\
\midrule %
Pythia 6.9B    & + DPO     & 0.2766          & \textbf{0.0303} & \textbf{0.2582} & 0.2191          & 0.5804          \\
               & + DDRO    & \textbf{0.2892} & 0.0296          & 0.2542          & \textbf{0.2375} & \textbf{0.6405} \\
\midrule %
LLaMA 7B       & + DPO     & 0.3281          & \textbf{0.1008} & \textbf{0.3332} & 0.2215          & 0.5564          \\
               & + DDRO    & \textbf{0.3350} & 0.1001          & 0.3220          & \textbf{0.2240} & \textbf{0.5973} \\
\bottomrule %
\end{tabular}
\end{table*}

\end{document}